\documentclass[mnsc]{informs3} 

\OneAndAHalfSpacedXI 

\usepackage{natbib}
 \bibpunct[, ]{(}{)}{,}{a}{}{,}%
 \def\bibfont{\small}%

\usepackage{booktabs}
\usepackage{xcolor}

\newcommand{\pdr}{\mathrm{PDR}}

\TheoremsNumberedThrough     
\ECRepeatTheorems

\EquationsNumberedThrough    

\MANUSCRIPTNO{MS-000-0000}

\usepackage[colorlinks=true,
            linkcolor=blue,
            citecolor=blue,
            urlcolor=blue, hypertexnames=false]{hyperref}

\makeatletter
\g@addto@macro\UrlBreaks{\do\a\do\b\do\c\do\d\do\e\do\f\do\g\do\h\do\i\do\j
  \do\k\do\l\do\m\do\n\do\o\do\p\do\q\do\r\do\s\do\t\do\u\do\v\do\w\do\x\do\y\do\z
  \do\A\do\B\do\C\do\D\do\E\do\F\do\G\do\H\do\I\do\J\do\K\do\L\do\M\do\N\do\O
  \do\P\do\Q\do\R\do\S\do\T\do\U\do\V\do\W\do\X\do\Y\do\Z\do\0\do\1\do\2\do\3
  \do\4\do\5\do\6\do\7\do\8\do\9}
\makeatother
\usepackage{bookmark}
\bookmarksetup{depth=2}

\makeatletter

\def\theARTICLETOP{}

\def\ps@headings{%
  \def\@oddhead{}%
  \def\@evenhead{}%
  \def\@oddfoot{\hfil\thepage\hfil}%
  \def\@evenfoot{\hfil\thepage\hfil}%
}

\def\ps@ECheadings{%
  \def\@oddhead{}%
  \def\@evenhead{}%
  \def\@oddfoot{\hfil\thepage\hfil}%
  \def\@evenfoot{\hfil\thepage\hfil}%
}

\makeatother

\AtBeginDocument{\pagestyle{headings}}

\begin{document}



\RUNTITLE{Counterfactual Optimal Action Trees (COAT)}

\TITLE{Counterfactual Optimal Action Trees (COAT): Interpretable Prescriptive Policies from \\Observational Data}


\ARTICLEAUTHORS{%
\AUTHOR{Youssef Drissi \quad Markus Ettl \quad
Shivaram Subramanian$^\dagger$ \quad
Wei Sun$^\dagger$ \quad Zack Xue}
\vspace{8pt}
\AFF{\small IBM Research}
\footnotetext{$^\dagger$Equal contributors.}
}

\ABSTRACT{%
We introduce \textsc{COAT} (Counterfactual Optimal Action Tree), a framework for learning
interpretable prescriptive policies from observational data. \textsc{COAT} combines
counterfactual outcome estimation with large-scale mixed-integer optimization, using column generation to
translate causal predictions into feasible, transparent decisions under business
and regulatory constraints. We apply \textsc{COAT} to airline ancillary pricing, a setting characterized by
complex business rules and limited experimental flexibility. In a 17-week field
pilot with a major global airline, \textsc{COAT} increased upsell revenue per
booking by 6.9\%, with the airline projecting \$50--\$150 million in incremental
annual premium seat revenue across eligible domestic markets.
The success of the pilot led to scaled
adoption and informed broader AI-driven decision initiatives within the
organization.
}%


\KEYWORDS{Prescriptive analytics, counterfactual estimation, mixed-integer optimization, column generation, artificial intelligence}


\maketitle

%


\section{Introduction}

Artificial intelligence (AI) is increasingly embedded in high-stakes business decisions across industries. In retail, it supports personalized recommendations and dynamic offers \citep{yang2022dynamic,cao2019doubly}; in financial services, credit risk assessment and fraud detection \citep{goyal2025adoption,fritz2022financial}; and in logistics and supply chains, demand forecasting and routing decisions \citep{khedr2024enhancing}. Across these domains, AI promises faster adaptation, finer-grained personalization, and improved operational efficiency.

Pricing is among the most visible and consequential applications of this transformation, and airlines provide a particularly compelling setting. Airlines were early adopters of revenue management (RM) systems that forecast demand, allocate capacity across fare classes, and dynamically open or close price buckets. While these systems represented a major advance over static pricing, they remain constrained by human-designed structures and relatively simple statistical models. Recent advances in AI raise the prospect of moving beyond fare buckets entirely, learning pricing policies directly from large-scale data and adapting them dynamically to changing conditions.

This prospect has begun to materialize in practice. Delta Air Lines disclosed the deployment of AI-based pricing tools on a subset of domestic routes, reporting strong revenue gains and announcing plans to expand AI pricing from roughly 3\% to 20\% of fares by the end of 2025 \citep{Fortune2025,ABCNews2025}. According to company executives, these systems process thousands of signals, such as booking patterns, competitor fares, and schedule changes, far more rapidly than traditional analyst-driven workflows. At the same time, the announcement triggered public and regulatory scrutiny, with concerns raised about fairness, transparency, and compliance. Delta emphasized that its approach does not rely on personal data and operates within existing fare filing systems \citep{DeltaNewsHub2025,Reuters2025}.

This episode illustrates a broader challenge in enterprise AI adoption. While modern AI systems excel at prediction and scale, deploying them responsibly in high-stakes decision environments requires more than predictive accuracy. Decisions must be interpretable, auditable, and consistent with regulatory, fairness, and operational constraints. In many applications, these requirements are binding constraints that determine whether an AI system can be deployed at all. Bridging the gap between predictive AI and deployable decision-making remains a central challenge. 

To address this challenge, we introduce the \emph{Counterfactual Optimal Action Tree} (\textsc{COAT}) framework, a prescriptive decision-making approach that learns interpretable policies directly from observational data. \textsc{COAT} integrates counterfactual outcome estimation with large-scale mixed-integer optimization to construct policies that explicitly respect operational constraints. The framework proceeds in two stages: estimating counterfactual outcomes for alternative actions, and solving a constrained optimization problem to produce an interpretable policy in the form of an action tree. To ensure scalability, we develop a path-based formulation solved via column generation, enabling optimization over large policy spaces. Our central contribution is the design and field validation of this framework in a live, regulated deployment; the accompanying theoretical results characterize when the path-based formulation remains tractable and interpretable at deployment scale, and serve that end rather than standing alone.

We validate \textsc{COAT} in collaboration with a major international airline, focusing on premium seat pricing as a high-value ancillary revenue stream. In a 17-week live pilot, \textsc{COAT} achieved a 6.9\% uplift in upsell revenue per booking, which the airline projected would translate to \$50--\$150 million in incremental annual premium seat revenue across eligible domestic markets. The impact was evaluated using the synthetic control method, providing credible causal evidence in a setting where randomized experiments were infeasible. Following the pilot, the airline scaled adoption of \textsc{COAT} and initiated broader enterprise efforts around AI-enabled decision systems.

More broadly, this work highlights the role of operations research (OR) in the era of AI. While advances in machine learning have dramatically expanded predictive capabilities, large-scale deployments that directly optimize decisions under operational constraints remain rare. By integrating causal estimation, scalable optimization, and interpretable policy design, \textsc{COAT} demonstrates how OR can serve as a decision layer that bridges predictive AI and real-world deployment. The results suggest that the practical value of AI lies not only in better prediction, but in better-designed decision systems that align performance with organizational and societal requirements.

\section{Literature Review}

This paper connects several streams of research, including explainable AI,
tree-based prescriptive optimization, causal decision making, and airline
pricing. While each literature addresses important aspects of decision support,
none alone resolves the joint challenges of learning from observational data,
optimizing decisions under operational constraints, and producing interpretable
policies suitable for deployment. We review the most closely related work and
position \textsc{COAT} relative to these streams.

\subsection{Explainable AI (XAI)}

Interpretability has become a central concern in artificial intelligence as
complex models are increasingly deployed in high-stakes business and policy
settings \citep{allen2023interpretable,arrieta2020explainable,du2019techniques}.
Beyond trust and transparency, interpretability is often a prerequisite for
regulatory compliance, auditing, and organizational adoption.

Existing work broadly falls into two categories. \emph{Intrinsic
interpretability} approaches embed transparency directly into the model
structure, including decision trees and rule lists
\citep{lakkaraju2016interpretable}, generalized additive models and their
variants \citep{rudin2019stop,caruana2015intelligible}, as well as prototype-based
and self-explaining architectures
\citep{kim2016examples,li2018deep,alvarez2018towards}. These models aim to avoid
black-box behavior altogether.

A complementary line of research develops \emph{post-hoc explanation} methods
that rationalize the behavior of trained black-box models. Prominent examples
include local approximation and attribution techniques such as LIME and SHAP
\citep{ribeiro2016should,lundberg2017unified}, as well as counterfactual
explanations \citep{tsiourvas2024manifold} and knowledge distillation, in which
complex models are summarized using simpler, interpretable surrogates
\citep{hinton2015distilling,puiutta2020explainable}. 

Our work is closest in spirit to this latter line, but differs in two important
ways. First, we focus on the \emph{prescriptive} setting, where the objective is
to recommend optimal actions rather than explain predictions. Second, the
interpretable policy in \textsc{COAT} is not a surrogate approximation of a
black-box decision rule. Instead, it is learned directly by optimizing over
counterfactual outcomes produced by a causal model. This separation allows
\textsc{COAT} to combine the expressive power of modern predictive models with
the transparency and auditability of tree-based decision policies.

\subsection{Tree-Based Prescriptive Methods}

Decision trees have long been valued for their interpretability and rule-based
structure \citep{NEURIPS2020_1373b284}. Because learning optimal trees is
NP-complete \citep{laurent1976constructing}, early methods such as CART relied on
greedy heuristics \citep{breiman1984classification}. More recent advances learn
globally optimal trees, either via mixed-integer programming (MIP)
\citep{aghaei2019learning,aghaei2020learning,bertsimas2017optimal} or via
specialized dynamic-programming and branch-and-bound algorithms with analytical
bounds, which scale to substantially larger instances by exploiting the recursive
structure of the tree
\citep{hu2019optimal,lin2020generalized,van2022fair}.

While this literature originated in prediction, similar ideas have been extended
to prescriptive settings, where the goal is to recommend actions rather than
predict outcomes. Prescriptive (or policy) trees have been learned using greedy
approaches \citep{biggs2020model,kallus2017recursive} and optimal formulations
\citep{amram2020optimal,bertsimas2019optimal,ikeda2023prescriptive,jo2021learning,pacepoetree}. Related work has also explored neural network--based policies that can be
post-hoc converted into prescriptive trees \citep{sun2023learning}.

The core algorithmic idea underlying \textsc{COAT} first appeared in an anonymized conference publication \citep{AnonymousAAAI2022}, which introduced the path-based formulation and column-generation algorithm and reported computational comparisons against arc-based and other prescriptive-tree methods on synthetic and publicly available real datasets. The present paper substantially extends that foundation by developing theoretical results that characterize the formulation's behavior under deployment constraints, and by addressing the design, scalability, and live deployment of prescriptive policies in a real-world operational setting.

Our approach departs from prior prescriptive tree methods in its underlying
optimization formulation. Existing MIP-based approaches typically adopt an
arc-based representation, introducing decision variables for each possible branch
at each depth; the resulting formulations grow exponentially with tree depth,
creating scalability limitations \citep{bertsimas2017optimal}. Specialized
dynamic-programming and branch-and-bound methods
\citep{hu2019optimal,lin2020generalized,van2022fair} substantially improve the
scalability of optimal-tree learning, but they are developed primarily for the
\emph{predictive} setting and do not readily accommodate the operational and
inter-rule constraints that are central to prescriptive deployment.

In contrast, \textsc{COAT} adopts a path-based formulation that treats each
root-to-leaf path as a decision unit. This representation enables solution via
column generation \citep{bach2008exploring,jawanpuria2011efficient}, dynamically
generating only the most promising paths. Beyond computational efficiency, the
path-based structure naturally supports rich operational, regulatory, and
fairness constraints defined over complete decision rules. It also yields
multiway-split trees that are typically more compact and interpretable than their
binary-split counterparts, as each feature appears at most once along any
root-to-leaf path.

\subsection{Causal Analysis}

This paper relates to the emerging literature on \emph{causal decision making},
which integrates causal modeling with optimization to recommend actions
\citep{fernandez2022causal,tsiourvas2025causal}. This perspective differs from classical causal
inference, which primarily focuses on estimating treatment effects
\citep{athey2016recursive}, rather than mapping those estimates into optimal
decisions under constraints. Recent work by \citet{harsha2025practical} combines counterfactual cost estimation with column generation in a contextual bandit framework to prescribe resource allocation decisions under service-level constraints.

In our setting, causal modeling provides counterfactual estimates of purchase
behavior under alternative prices. However, such estimates alone do not yield
feasible or optimal policies. \textsc{COAT} bridges this gap by embedding
counterfactual predictions within a large-scale optimization framework that
explicitly incorporates operational and business constraints, resulting in
interpretable prescriptive policies.

Causal inference also plays a central role in evaluation. In many high-stakes
applications, randomized A/B testing is infeasible, as was the case in our
airline deployment. To address this, we employ the synthetic control method (SCM)
\citep{abadie2010synthetic}, which constructs a weighted combination of control
units to replicate pre-intervention outcomes of treated units.  
We refer readers to \citet{abadie2021using} for a comprehensive review.

\subsection{Airline Pricing and Pricing Field Experiments}

Pricing has long been a central application of operations research and management
science, with airlines serving as one of its most prominent success stories
\citep{talluri2006theory,phillips2021pricing}. Classical revenue management
systems focus on capacity control, allocating limited inventory across
predefined fare classes rather than directly setting prices.

Recent work has increasingly explored dynamic and data-driven pricing approaches
\citep{shukla2019dynamic,fiig2018dynamic,mcafee2006dynamic}. Our work differs by
focusing on ancillary products and directly optimizing prices using
counterfactual demand estimates derived from observational data. In addition to
methodological contributions, we demonstrate the approach in a live airline
deployment, providing rare field-based evidence in a highly regulated pricing
environment.

More broadly, our study contributes to the growing literature on pricing field
experiments and evaluations, including large-scale deployments in e-commerce and
retail \citep{liu2019dynamic,cooprider2023science,chen2020high,caro2012clearance,
gao2022socialize}. By conducting a multi-month live pilot in airline ancillary
pricing, we extend this line of work to a domain characterized by complex
operational constraints, regulatory scrutiny, and strong fairness
considerations.

\section{Problem Setup and Prescriptive Framework}

This section formalizes the causal decision-making problem and introduces the
notation used throughout. We describe the observational data setting and causal
assumptions needed to identify counterfactual outcomes, and then present the
\textsc{COAT} framework, which decouples counterfactual estimation from policy
optimization over interpretable policy trees. This formulation underpins the
mixed-integer programming model and the algorithm developed in
subsequent sections.

\subsection{Setup and Notation}

We consider a decision-maker who selects an action (or treatment) from a discrete
set \( [m] := \{1, \dots, m\} \) for each subject characterized by covariates
\( X \in \mathbb{R}^k \). Each subject has potential outcomes
\( \{Y(1), \dots, Y(m)\} \), where \( Y(t) \) denotes the outcome that would be
observed if action \( t \) were assigned. The outcome \( Y \) may be discrete or
continuous. For example, in the pricing application in
Section~\ref{sect_case_study}, \( Y \in \{0,1\} \) indicates whether a purchase
occurred, whereas in personalized medicine, \( Y \) may represent a continuous
health metric such as blood sugar level \citep{johansson2016learning}.

We observe \( n \) i.i.d.\ samples \( \{(X_i, T_i, Y_i)\}_{i=1}^n \), where
\( X_i \) are covariates, \( T_i \in [m] \) is the action assigned by the
historical policy, and
\( Y_i := Y_i(T_i) \) is the observed outcome. The data are
\emph{observational}: for each subject, only the factual outcome
\( Y_i(T_i) \) is observed, while counterfactual outcomes
\( \{Y_i(t) : t \neq T_i\} \) remain unobserved. This defines the central
challenge of causal decision-making, i.e., learning optimal actions when only one
potential outcome per subject is observed.

To identify causal effects from observational data, we impose the standard
assumptions of unconfoundedness, positivity, and consistency. Specifically,
\emph{(unconfoundedness)} potential outcomes are conditionally independent of
treatment assignment given covariates, \( Y(t) \perp\!\!\!\perp T \mid X \);
\emph{(positivity)} each action occurs with positive probability over the
relevant covariate space, \( \mathbb{P}(T=t \mid X=x) > 0 \); and
\emph{(consistency)} observed outcomes coincide with the corresponding potential
outcomes under the realized treatment (i.e., if \( T_i=t \), then
\( Y_i=Y_i(t) \)).

These assumptions are commonly invoked in operational settings where actions are
determined by explicit policies based on observed contextual features.
In pricing, marketing, and recommendation systems, treatment assignment is often
driven by recorded covariates (e.g., booking time, fare class, or demand
signals), supporting unconfoundedness. Positivity is typically ensured through
initial experimentation or policy variation that prevents actions from being
deterministically excluded.

\subsection{The COAT Framework: Estimation and Prescriptive Optimization}

Our objective is to learn a policy
\( \tau: \mathbb{R}^k \rightarrow [m] \) that maps covariates \( X \) to an action
\( \tau(X) \) in order to maximize the expected counterfactual outcome. For
interpretability and operational transparency, we restrict \( \tau \) to a
feasible class of tree-based policies \( \mathcal{T} \), where each internal node
splits on a covariate and each leaf prescribes an action. We refer to such
policies as \emph{action trees} or \emph{prescriptive trees}.

The learning process proceeds in two stages:
\begin{enumerate}
    \item \textbf{Counterfactual Estimation.} Estimate the potential outcome model
    \( f(X,t) \) using observational data.
    \item \textbf{Action Tree Optimization.} Optimize over a constrained class of
    feasible trees \( \mathcal{T}_{\mathrm{feas}} \subseteq \mathcal{T} \) to
    identify a policy that maximizes the expected objective.
\end{enumerate}

Formally, the policy optimization problem is
\begin{equation}
\label{policy_optimization_tree}
\tau^*
=
\argmax_{\tau \in \mathcal{T}_{\mathrm{feas}}}
\;
\mathbb{E}_{X}\!\left[g\!\left(f(X, \tau(X))\right)\right],
\end{equation}
where \( g(\cdot) \) denotes a possibly transformed objective of interest. For
example, in pricing, \( f(X,t) \) represents purchase probability and
\( g(f(X,t)) = t \cdot f(X,t) \) captures expected revenue. In practice, the
expectation in~\eqref{policy_optimization_tree} is approximated by the empirical
average over the observed samples.

This modular design facilitates real-world deployment: counterfactual estimation
and policy optimization are decoupled, allowing flexible predictive models
upstream and a model-agnostic optimization downstream. Moreover, the feature sets
used in the two stages may often differ, i.e., rich features can be used for estimation, while
the action tree can depend on a smaller, deployment-ready subset suitable for
real-time implementation.

While accurate counterfactual estimation is essential, our primary contribution
lies in the optimization of interpretable action trees under operational
constraints; accordingly, we treat the estimation step as modular and focus the
remainder of the paper on the policy learning problem.

\subsection{Counterfactual Outcome Estimation}
\label{sect:counterfactual_estimation}

We aim to estimate the counterfactual outcome model
\[
f(X,t) := \mathbb{E}[Y(t) \mid X],
\]
where \( Y(t) \) denotes the potential outcome under action \( t \). Under the
assumptions of consistency, unconfoundedness, and positivity, this quantity is identified from
observational data as
\[
\mathbb{E}[Y(t) \mid X] = \mathbb{E}[Y \mid X, T=t].
\]
In the observational setting, however, only the factual outcome
\( Y_i = Y_i(T_i) \) is observed for each sample, and non-random treatment
assignment may induce selection bias, rendering naïve supervised learning
inconsistent.

A variety of causal estimation methods have been proposed to address this
challenge. Matching methods reduce confounding by pairing treated and untreated
samples with similar covariates, but they become inefficient in high-dimensional
settings \citep{stuart2010matching,abadie2006large}. Inverse propensity weighting
(IPW) reweights observations by estimated treatment probabilities to achieve
covariate balance, though extreme propensities can substantially inflate variance
\citep{rosenbaum1987model,hirano2003efficient}. Outcome modeling approaches,
including S-, T-, and X-learners, directly estimate potential outcomes using
machine learning models \citep{kunzel2019metalearners,nie2021quasi}.

In this work, we adopt a doubly robust (DR) estimator \citep{dudik2011doubly},
which combines outcome modeling with propensity score reweighting and yields
consistent estimates if either component is correctly specified. Doubly robust
estimators have also been successfully incorporated into prescriptive tree learning frameworks
\citep{biggs2020model,amram2020optimal,jo2021learning}.

\subsection{Action Tree Optimization Problem}

\begin{figure}
 \centering
  \includegraphics[width=.6\linewidth]
  {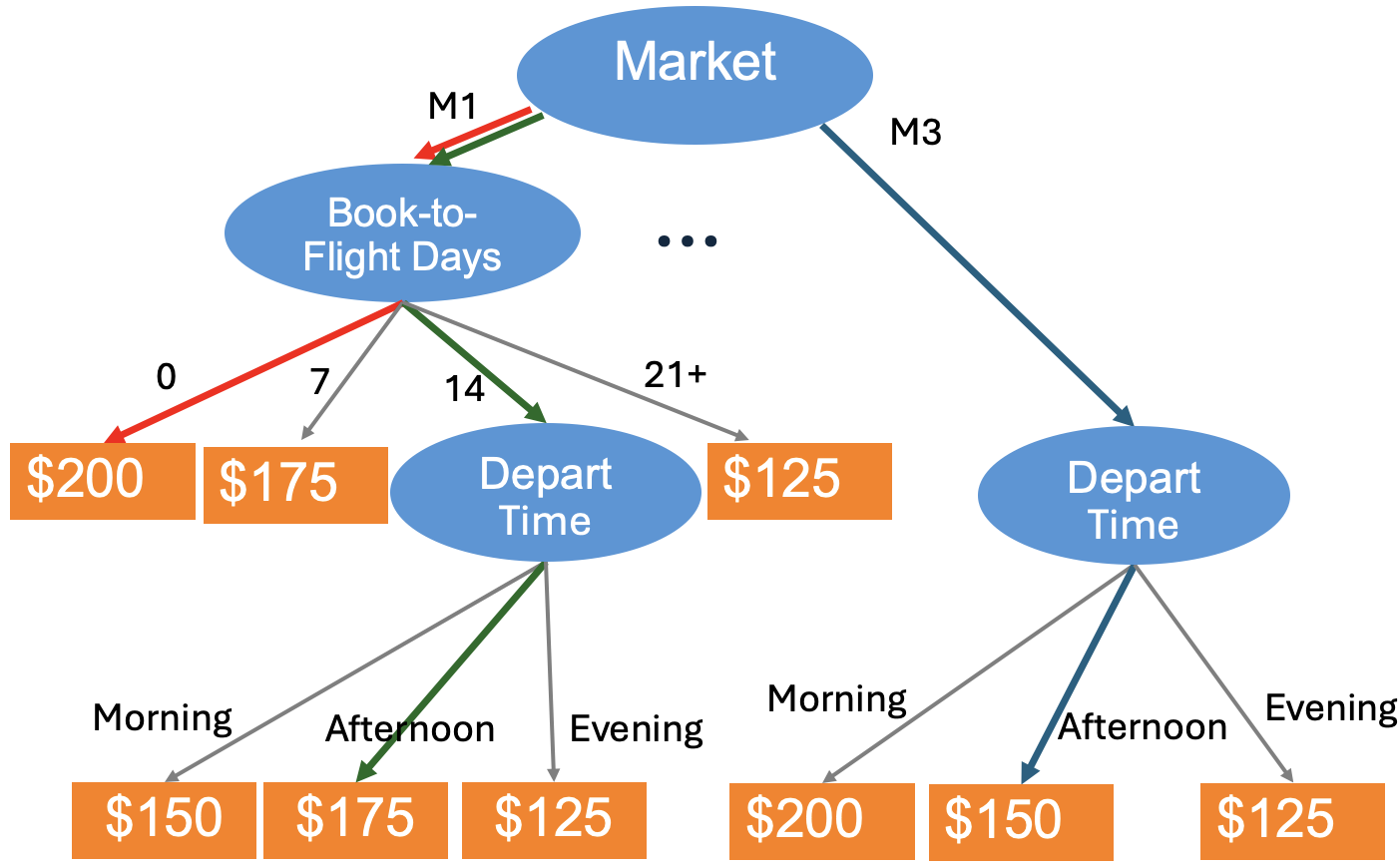}
  \caption{An example of a multiway-split prescriptive tree for airline ancillary pricing}
  \label{multiway_split}
\end{figure}

Given the estimated counterfactual outcomes \( \hat f(X_i,t) \), we compute
\( g_{i,t} := g(\hat f(X_i,t)) \) for each sample \( i \) and action \( t \). This
defines the unconstrained, individual-level optimal action:
\begin{equation}
\label{individual_policy}
\tau^*(X_i) = \argmax_{t \in [m]} g_{i,t}.
\end{equation}
While this policy maximizes outcomes on a per-sample basis, it is complex and
impractical to deploy.

We therefore constrain \( \tau \) to interpretable decision trees. Prior work has
focused on binary trees of fixed depth \( d \), denoted \( \mathcal{T}_B(d) \)
\citep{biggs2020model,amram2020optimal,jo2021learning}. We generalize this to multiway-split trees
with at most \( l \) leaves, denoted \( \mathcal{T}_M(l) \) (see Figure~\ref{multiway_split}). A multiway tree with
\( l \) leaves has expressiveness comparable to a binary tree of depth
\( d \) satisfying \( 2^d = l \), while often yielding more compact
representations.

The resulting empirical optimization problem is
\begin{equation}
\label{tree_optimization}
\max_{\tau \in \mathcal{T}_M(l)} \sum_{i=1}^n g_{i,\tau(X_i)}.
\end{equation}

In the next section, we present a mixed-integer programming formulation
and an efficient column-generation algorithm that exploit the structure
of multiway-split trees by representing candidate policies as paths in a
policy graph.

\section{Policy Optimization via Path-Based Trees}
To learn an interpretable policy, we construct a space of candidate decision
rules represented as paths in a policy graph
(Figure~\ref{feature_space}). Each path corresponds to a rule that assigns an
action based on a subset of features. The policy learning task then reduces to
selecting a small, non-overlapping subset of paths that together form a valid
decision tree and maximize the overall counterfactual objective. This selection
problem is formulated as a mixed-integer program (MIP), whose solution induces
an interpretable prescriptive tree (Figure~\ref{multiway_split}).

In this section, we first describe the construction of the feature path space and the
resulting optimization formulation. The solution algorithm, based on column
generation, is presented in Section~\ref{sect_algorithm}. We then show, in
Section~\ref{sect_constraint}, how this framework can be extended to incorporate
business and operational constraints.

\subsection{Policy Graph Construction}\label{sect_rule_space}

We construct an acyclic multi level directed graph \( G \) where each node represents a unique feature value. Each feature corresponds to a distinct level in the graph, with its values represented as nodes, and nodes from one level fully connected to those in the next. The graph also includes a source and a sink node. For every feature except the action feature \( \pi \), we introduce a dummy node labeled \emph{SKIP}. Figure~\ref{feature_space} illustrates an example with three features: \textit{Market}, \textit{Book to Flight Days}, which indicates how far in advance the ticket is purchased, \textit{Depart Time}, and \textit{Price} as the action feature.

A policy (or decision rule) corresponds to a path \( p \in \mathcal{P} \) from the source to the sink in \( G \). When a path traverses a \emph{SKIP} node, the associated feature is omitted from that rule. The inclusion of \emph{SKIP} nodes allows paths to represent all possible feature combinations, thereby defining a comprehensive rule set \( \mathcal{P} \) from which the optimal subset of policies will be selected.

\begin{figure}
 \centering
  \includegraphics[width=.7\linewidth]
  {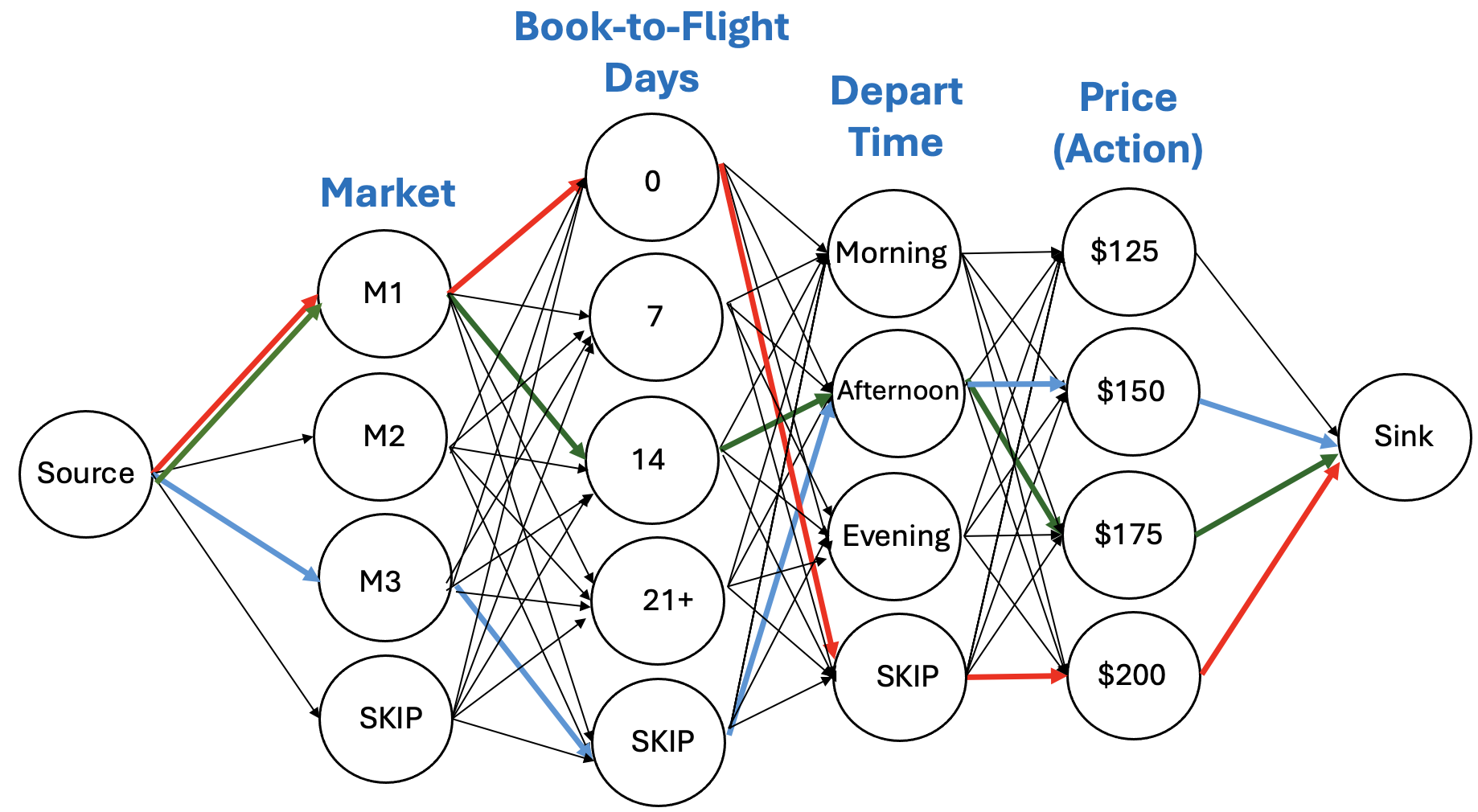}
  \caption{Policy graph with three features and \textsc{Price} as the action }
  \label{feature_space}
\end{figure}

Given any path \( p \), we can determine the set of samples that satisfy its conditions and compute their associated counterfactual outcomes. The objective is to select a subset of rules that maximizes the total counterfactual outcome. However, because the policy space grows exponentially with the number of features \( k \), naive enumeration becomes infeasible for high dimensional data. Section~\ref{sect_algorithm} presents an algorithm that efficiently navigates this space by dynamically generating paths as needed.

For clarity, we assume throughout this section that all features are categorical. In Section~\ref{subsect_numerical_features}, we extend the framework to handle numerical features through discretization and adaptive splitting.

\subsection{Mixed-Integer Programming Formulation}
\label{sect_MIP}

By construction, a sample may satisfy multiple candidate paths in the policy
graph, for example, a path that matches its exact feature values as well as
other paths that traverse \textsc{SKIP} nodes. To recover a valid tree,
however, the selected paths must form a non-overlapping partition of the sample
space, so that each sample is ultimately assigned to exactly one rule
\citep{Furnkranz2017}. We enforce this structure through a mixed-integer
programming (MIP) formulation whose feasible solutions correspond to valid
multiway decision trees.

\paragraph{Candidate policies.}
Let \( j = 1, \ldots, |\mathcal{P}| \) index the set of candidate policies (paths)
derived from the policy graph. Each policy \( j \) prescribes an action
\( t_j \in [m] \) and covers a subset of samples
\( S_j \subseteq \{1,\dots,n\} \), where a sample \( i \in S_j \) satisfies all
feature conditions along path \( j \).

The total reward associated with policy \( j \) is defined as
\[
r_j = \sum_{i \in S_j} g_{i,t_j},
\]
where \( g_{i,t_j} \) denotes the predicted reward of assigning action \( t_j \)
to sample \( i \).

Define the binary incidence matrix \( a_{ij} \) by
\[
a_{ij} =
\begin{cases}
1 & \text{if sample } i \text{ satisfies policy } j, \\
0 & \text{otherwise}.
\end{cases}
\]
Then the policy reward can be written compactly as
\[
r_j = \sum_{i=1}^{n} a_{ij} g_{i,t_j}.
\]

\paragraph{Decision variables.}
We introduce the following variables:
\begin{itemize}
    \item \( z_j \in \{0,1\} \), indicating whether policy \( j \) is selected;
    \item \( s_i \ge 0 \), a slack variable associated with sample \( i \).
\end{itemize}

Slack variables are introduced to ensure feasibility of the restricted master
problem during column generation when coverage may be temporarily incomplete.
In practice, we initialize the algorithm with a trivial path that assigns a common
action to all samples, so that all slack variables are zero.
At optimality, the penalty coefficients \( c_i \) are chosen sufficiently large
to ensure that all slack variables are driven to zero, thereby enforcing exact
coverage.

\paragraph{The \textsc{COAT} formulation.}
The Counterfactual Optimal Action Tree problem is formulated as:
\begin{align}
\textbf{(COAT)} \quad
\max \; & \sum_{j=1}^{|\mathcal{P}|} r_j z_j - \sum_{i=1}^{n} c_i s_i
\label{objective} \\
\text{s.t.} \quad
    & \sum_{j=1}^{|\mathcal{P}|} a_{ij} z_j + s_i = 1,
        && \forall i = 1,\dots,n
        && \text{(Coverage)} \label{constraint_coverage} \\
    & \sum_{j=1}^{|\mathcal{P}|} z_j \le l,
        && && \text{(Tree size)} \label{constraint_capacity} \\
    & z \in \mathcal{Z},
        && && \text{(Additional constraints)} \label{constraint_additional} \\
    & z_j \in \{0,1\},
        && \forall j = 1,\dots,|\mathcal{P}| \nonumber \\
    & s_i \ge 0,
        && \forall i = 1,\dots,n. \nonumber
\end{align}

Here, \( l \) is a user-defined upper bound on the number of selected policies
(i.e., leaf nodes), and \( \mathcal{Z} \) represents additional structural or
business constraints 
which we detail in Section~\ref{sect_constraint}.

While a training sample may satisfy multiple candidate policies due to
\textsc{SKIP} nodes, constraint~\eqref{constraint_coverage} ensures that, at
optimality, each sample is assigned to exactly one selected policy. At test
time, if a previously unseen feature combination arises, we assign a default
action or match the observation to the closest training samples to determine an
appropriate policy.

\begin{theorem}[Equivalence to Multiway Decision Trees]
\label{them:equivalence}
Under the \textsc{COAT} formulation, any optimal solution \( \mathbf{z}^* \)
induces a collection of active paths
\( \mathcal{P}(\mathbf{z}^*) \subseteq \mathcal{P} \) that can be represented as a
feasible multiway decision tree \( \mathcal{T}_M^*(l) \) with at most \( l \)
leaves. Moreover, the objective value attained by \( \mathcal{T}_M^*(l) \)
coincides with the optimal objective value of the \textsc{COAT} problem.
\end{theorem}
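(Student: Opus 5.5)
The proof has three parts: show that slacks vanish at optimality, build a multiway tree from the selected paths, and show the tree's reward equals the COAT objective. I take two assumptions. First, as stated in the text, the formulation contains the trivial path that assigns one common action to all samples, so a slack-free feasible point exists. Second, the penalties satisfy $c_i > \max_j |r_j| + \sum_{i'} \max_t |g_{i',t}|$, or any bound large enough to exceed every attainable reward change. Under these assumptions I first argue that every optimal solution has $s^* = 0$. If some $s_i^*>0$, then $s_i^* = 1$, since the coverage constraint with binary $z$ forces $s_i \in \{0,1\}$. The objective is then at most $\sum_i \max_t |g_{i,t}| - c_i$, which is strictly below the value of the trivial solution. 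This contradicts optimality. Hence $\sum_j a_{ij} z_j^* = 1$ for every $i$: the active paths $\mathcal{P}(\mathbf{z}^*)$ partition the training samples, and by the tree-size constraint there are at most $l$ of them.

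Next I build the tree from $\mathcal{P}(\mathbf{z}^*)$ by recursion over the feature levels of $G$, in their fixed graph order. Each active path is a conjunction of equality conditions $x_f = v_f$ on the features it does not SKIP, followed by an action $t_j$. At the root I take the first feature level. If at least one active path conditions on that feature, I create a multiway split on it, with one branch per value appearing among those paths. Paths that SKIP the feature are sent to every branch, and the same procedure is repeated on the next level within each branch, restricted to the paths compatible with it. If no active path conditions on the feature, I pass to the next level without splitting. A node becomes a leaf labeled $t_j$ once exactly one path remains compatible with it. Along any root-to-leaf branch each feature is split at most once, because each feature occupies a single level; this matches the multiway structure of $\mathcal{T}_M(l)$.

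The main obstacle is proving that this recursion produces a genuine tree with at most $l$ leaves and assigns each sample to the same path as in the MIP. Coverage gives disjointness only on the observed samples: two active paths may both be compatible with an unobserved feature combination. For example, $x_1=a$ and $x_2=b$ can both be active when no training sample has $(a,b)$. I would handle this in two steps. First, I identify the tree's behavior with the induced policy on the training data, since each sample lies in exactly one $S_j$. Second, I merge or prune branches that contain no training samples, or assign them the default action, as the paper does for unseen combinations. A branch containing a training sample receives exactly one path, by coverage, so every nonempty leaf corresponds to a distinct active path. The number of leaves carrying samples is therefore at most $|\mathcal{P}(\mathbf{z}^*)| \le l$. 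Empty branches can be collapsed into a sibling leaf without changing any sample's assignment, which keeps the count within $l$.

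Finally, the objective equivalence is direct. Each sample $i$ lands in the leaf of its unique path $j(i)$ and receives action $t_{j(i)}$. The tree's empirical reward is therefore
\begin{equation*}
\sum_i g_{i,t_{j(i)}} = \sum_j \sum_i a_{ij} z_j^* g_{i,t_j} = \sum_j r_j z_j^* ,
\end{equation*}
which, because $s^*=0$, equals the optimal COAT objective.
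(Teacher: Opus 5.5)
Your slack argument and the final reward identity are sound, and more explicit than the paper, which simply takes the $c_i$ large enough that slacks vanish. The gap is the leaf count. In your recursion, a path that SKIPs a split feature is copied into every branch. It can then be the unique compatible path in several sample-carrying leaves, so the claim that every nonempty leaf corresponds to a \emph{distinct} active path is false. For example, take $x_1\in\{a,b\}$ and $x_2\in\{c,d\}$, with samples at all four profiles, and active paths $P_1=\{x_1=a,x_2=c\}$, $P_2=\{x_2=d\}$ (SKIP $x_1$) and $P_3=\{x_1=b,x_2=c\}$. Coverage holds exactly. If three distinct actions are strictly best on these three sets, this is the unique optimum for $l=3$. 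Yet your tree, splitting $x_1$ first, has four nonempty leaves, two of which belong to $P_2$. Separately, creating branches only for values that some active path tests loses samples whose value is tested by no path but which a path skipping that feature covers; you need a branch for every value at that level.

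Reordering the features does not rescue this in general. For binary $x_1,x_2,x_3$, consider the five paths $\{x_1{=}0,x_2{=}0\}$, $\{x_2{=}1,x_3{=}0\}$, $\{x_1{=}1,x_3{=}1\}$, $\{x_1{=}0,x_2{=}1,x_3{=}1\}$ and $\{x_1{=}1,x_2{=}0,x_3{=}0\}$. They partition $\{0,1\}^3$, and each variable is skipped by one of the first three. With all profiles observed and rewards under which five distinct actions are strictly best on the five sets, these paths are the unique optimum for $l=5$. Any tree attaining that reward must have label-pure leaves. Whatever variable it tests at the root splits one of the first three rules across both subtrees, so it has at least six leaves. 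Thus, if ``decision tree'' means that the tests route each sample to a single leaf, the bound of $l$ leaves is false in general, and no repair of your recursion can deliver it.

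The paper's proof gets the bound from a different object. It merges the selected paths into a trie along the global feature order, in effect treating a skipped level as just another edge. Each selected path then stays a single root-to-leaf branch, so leaves are in bijection with $\mathcal{P}(\mathbf{z}^*)$ by construction. The fact that each training sample lands in exactly one leaf comes from the coverage constraint, not from the branching tests. To finish, either adopt that representation and drop the duplication step, or weaken the conclusion, e.g.\ to ``at most $l$ distinct rules'' with copies of a skipping rule allowed in several leaves.
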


For clarity of exposition, proofs of all theoretical results are deferred to the
electronic companion.

While the \textsc{COAT} formulation admits a compact conceptual representation,
the number of candidate paths grows exponentially with the number of features,
rendering direct solution of the MIP 
intractable in large-scale settings. In the next section, we develop an efficient
solution approach that dynamically constructs only the most promising
policies.

\subsection{Scalable Algorithm via Column Generation}
\label{sect_algorithm}

The \textsc{COAT} formulation is computationally challenging due to the
combinatorial nature of jointly selecting a set of non-overlapping decision
rules and their associated actions, which subsumes the classical set
partitioning problem. Because the number of candidate paths grows exponentially
with the number of features, explicitly enumerating all policies is infeasible
in large-scale settings.

To address this challenge, we develop a column generation (CG) algorithm that
exploits the path-based structure of the formulation. Rather than enumerating
all candidate rules, CG iteratively solves a restricted master problem over a
small subset of paths and dynamically generates new, high-value paths through a
pricing subproblem.

This approach is enabled by the path-based representation underlying
\textsc{COAT}. In contrast to arc-based decision tree formulations, which require
binary variables for each possible branch at each depth and explicitly model
sample-to-node assignments, resulting in formulations with
\( \mathcal{O}(2^d n) \) variables, where \( d \) denotes the depth of the
binary tree. Our formulation operates directly at the level
of complete root-to-leaf paths. This structure admits a natural decomposition
between rule selection and rule generation, making CG an effective and scalable
solution method.

Comprehensive computational experiments establishing the scalability and solution-quality advantages of this path-based formulation over arc-based and other prescriptive-tree methods, on both synthetic and real datasets, were reported in the conference version of this work~\citep{AnonymousAAAI2022}. The present paper takes that formulation as given and focuses on theoretical extensions, deployment-oriented design, and field validation.

Column generation begins with a
restricted master problem (RMP) containing only a small subset of variables and
iteratively augments it by identifying improving variables through a pricing
subproblem. The process terminates when no variables with positive reduced cost
remain, at which point the RMP solution is optimal for the linear relaxation of
the full problem.

\subsubsection*{Restricted Master Problem (RMP).}

Let \( \mathcal{P}' \subseteq \mathcal{P} \) denote the current subset of candidate
paths included in the RMP. Relaxing integrality, we allow
\( z_j \in [0,1] \) for all \( j \in \mathcal{P}' \). The RMP is formulated as:
\begin{align}
\textbf{(RMP)} \quad \max \quad
    & \sum_{j \in \mathcal{P}'} r_j z_j - \sum_{i=1}^n c_i s_i \nonumber \\
\text{s.t.} \quad
    & \sum_{j \in \mathcal{P}'} a_{ij} z_j + s_i = 1,
        && \forall i = 1, \dots, n,
        \label{constraint_coverage_rmp} \\
    & \sum_{j \in \mathcal{P}'} z_j \le l,
        \label{constraint_capacity_rmp} \\
    & z_j \ge 0,
        && \forall j \in \mathcal{P}', \nonumber \\
    & s_i \ge 0,
        && \forall i = 1, \dots, n. \nonumber
\end{align}


The additional constraint set \( \mathcal{Z} \) introduced in
Eq.~\eqref{constraint_additional} is temporarily omitted here and reintroduced
in Section~\ref{sect_constraint}.

\subsubsection*{Pricing Subproblem.}

Let \( \lambda_i \) and \( \mu \) denote the dual variables associated with the
coverage constraints~\eqref{constraint_coverage_rmp} and the cardinality
constraint~\eqref{constraint_capacity_rmp}, respectively. The dual of the RMP is:
\begin{align}
\textbf{(Dual of RMP)} \quad \min \quad
    & \sum_{i=1}^n \lambda_i + l \mu \nonumber \\
\text{s.t.} \quad
    & \sum_{i=1}^n a_{ij} \lambda_i + \mu \ge r_j,
        && \forall j \in \mathcal{P}', \label{dual_constraint} \\
    & \lambda_i \ge -c_i,
        && \forall i = 1, \dots, n, \nonumber \\
    & \mu \ge 0. \nonumber
\end{align}

At optimality of the RMP, all paths \( j \in \mathcal{P}' \) satisfy dual
feasibility. Any path \( j \in \mathcal{P} \setminus \mathcal{P}' \) that violates
constraint~\eqref{dual_constraint} has positive reduced cost
\begin{equation}
\pi_j
= r_j - \left( \sum_{i=1}^n a_{ij} \lambda_i + \mu \right),
\label{eq_reduced_cost}
\end{equation}
and can improve the RMP objective when added.

The pricing subproblem therefore seeks paths with \( \pi_j > 0 \). This can be
formulated as 
a shortest-path problem with appropriately defined arc costs,
where the total path cost corresponds to \( -\pi_j \)
\citep{horne1980finding,irnich2005shortest}. In practice, it suffices to identify
a small number of paths with positive reduced cost at each iteration. 

\subsubsection*{CG Iteration and Final MIP.}

The column generation procedure alternates between solving the RMP and the
pricing subproblem until no paths with positive reduced cost are identified or a
predefined iteration limit is reached. At termination, the RMP solution is
optimal for the linear relaxation of the full \textsc{COAT} problem. Dual stabilization and regularization terms are added to mitigate stalling and accelerate convergence for large-scale instances~\citep{subramanian2008effective}.

Finally, integrality constraints on \( \mathbf{z} \) are reinstated, and the MIP
is solved over the generated column set using branch-and-bound or
branch-and-price techniques~\citep{barnhart1998branch} to obtain an integral
solution. In the resulting optimal solution, all slack variables are zero and
the selected paths define a valid multiway decision tree.

\subsection{Incorporating Operational Constraints}
\label{sect_constraint}

A key strength of the \textsc{COAT} framework lies in the flexibility of its
path-based formulation, which enables the incorporation of a wide range of
constraints introduced abstractly in
Eq.~\eqref{constraint_additional}, ensure that the learned policies are not only
optimal with respect to counterfactual outcomes, but also feasible,
interpretable, and aligned with real-world operational requirements.
We categorize  constraints into two types based on their scope:
\emph{inter-rule constraints}, 
and \emph{intra-rule constraints}, which interact with the column generation
algorithm in fundamentally different ways.

\subsubsection{Inter-Rule Constraints}

Inter-rule constraints govern relationships across multiple rules and
are enforced at the master problem level. A basic example is the cardinality
constraint in Eq.~\eqref{constraint_capacity}, which limits the total number of
selected rules.

More generally, inter-rule constraints from
Eq.~\eqref{constraint_additional} can be expressed as linear inequalities over
the rule selection variables \( \mathbf{z} \):
\begin{equation}
    \underline{q}_m \leq \sum_{j=1}^{|\mathcal{P}|} \rho_{mj} z_j \leq
    \overline{q}_m, \quad \forall m = 1, \dots, M,
\end{equation}
where \( \rho_{mj} \) captures the contribution of rule \( j \) to constraint
\( m \), and \( \underline{q}_m \) and \( \overline{q}_m \) represent lower and
upper bounds, respectively.

Because each selected path corresponds to a mutually exclusive leaf of the
decision tree, inter-rule constraints can naturally be applied to arbitrary
\emph{subsets of rules}, rather than uniformly across the entire population.
This is particularly important in practice, where operational limits often apply
only to specific segments, markets, or actions.
In the airline pricing application (Section~\ref{sect_case_study}), for example,
\( \rho_{mj} \) may represent the expected demand contributed by rule \( j \) in
market \( m \), while \( \underline{q}_m \) and \( \overline{q}_m \) encode minimum
sales targets, capacity limits, or business commitments that apply only to
certain routes or booking contexts.

To incorporate inter-rule constraints into the column generation algorithm, let
\( \overline{\tau}_m \geq 0 \) and \( \underline{\tau}_m \geq 0 \) denote the dual
variables associated with the upper and lower bounds, respectively. These dual
variables modify the reduced cost of a candidate path \( j \) as follows:
\begin{equation}
\pi_j
= r_j - \left(
\sum_{i=1}^n a_{ij} \lambda_i
+ \mu
+ \sum_{m=1}^M \rho_{mj} (\overline{\tau}_m - \underline{\tau}_m)
\right).
\end{equation}
As a result, inter-rule constraints directly influence the pricing subproblem by
guiding the generation of paths toward globally feasible and operationally
compliant solutions.

\subsubsection{Intra-Rule Constraints}

Intra-rule constraints restrict the internal structure of individual decision
rules to ensure interpretability and consistency with business logic. Unlike
inter-rule constraints, they do not couple multiple rules and are therefore
enforced locally within the pricing subproblem.

Typical examples include limits on rule length (analogous to tree depth),
minimum sample coverage requirements, and restrictions on allowable combinations
of features or actions. Some of these constraints are difficult, or impossible, to
express as linear constraints at the arc level. For instance, business rules such
as ``features A and B must not appear together,'' or ``features B and C must
either both appear or both be absent,'' impose logical dependencies across
multiple features within a single rule. Encoding such constraints in
arc-based formulations would require nontrivial linearizations and additional
auxiliary variables.

In the \textsc{COAT} framework, intra-rule constraints are naturally enforced
during path construction. As candidate paths are incrementally extended on the
policy graph \( G \), feasibility checks are applied at each step, and
infeasible paths are immediately pruned. 
By enforcing intra-rule constraints directly within the pricing subproblem, the
algorithm avoids explicit linearization of complex logical conditions, reduces
the effective search space, and preserves the interpretability of the resulting
decision rules.

\medskip
This separation between inter-rule and intra-rule constraints, coupled with their
integration at different stages of the CG procedure, highlights a
key advantage of the path-based formulation: it supports rich, context-dependent
operational constraints while preserving scalability and algorithmic
tractability.

\remark{Such flexibility is difficult to achieve in arc-based tree formulations. Because
arc-based models do not explicitly represent complete root-to-leaf paths,
constraints that apply to entire decision rules or to subsets of the population
defined by feature combinations are cumbersome to express and often require
auxiliary variables and complex linearizations. In contrast, \textsc{COAT}
represents each path as a complete, semantically meaningful rule, allowing
constraints to be imposed naturally both across rules and within individual
rules.}

\section{Design for Scalable and Deployable Prescriptive Policies}
\label{sect_comp_tractability}

Large-scale deployment of prescriptive decision policies imposes requirements that
extend beyond computational efficiency, including interpretability, governance,
and real-time execution under operational constraints.
This section examines how the \textsc{COAT} framework is explicitly designed to
respect such deployment constraints while remaining computationally tractable at
scale. The results in this section are deliberately modest in their mathematical
machinery; their role is to make precise why specific design choices, namely
bounded depth, prescriptive-feature restriction, and equality-based categorical
tests, yield tractable, auditable policies in practice, rather than to establish
new optimization theory.

\subsection{Depth Constraints as an Operational Design Choice}
\label{subsec:depth_constraints}

In real-world decision systems, policy complexity is constrained not only by
computational considerations, but also by interpretability, latency, and
governance requirements. Policies should be sufficiently simple to be audited, explained to stakeholders, and
executed reliably in real time. These considerations naturally limit the number
of features that a policy may condition on. In tree-based
representations, this translates directly into bounding the depth of the
tree. 

\paragraph{Implications for the Policy Search Space.}
To formalize the effect of depth constraints, consider a setting with \(k\)
prescriptive features, each taking at most \(\eta\) distinct values. Each path in
the decision tree corresponds to a rule defined by selecting a subset of
features and assigning values to them.

\begin{proposition}[Size of the Unconstrained Path Space]
\label{prop_feature_space}
The number of possible paths in the unconstrained search space satisfies
$|\mathcal{P}| = \mathcal{O}(\eta^k)$.
\end{proposition}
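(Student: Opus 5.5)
The plan is to count source-to-sink paths in the policy graph $G$ of Section~\ref{sect_rule_space} directly, since the graph is layered and consecutive levels are fully connected. Each of the $k$ prescriptive features occupies one level. That level has at most $\eta$ value nodes plus one \emph{SKIP} node, so at most $\eta+1$ nodes. The action feature $\pi$ contributes a final level with $m$ nodes and no \emph{SKIP}. Because consecutive levels are fully connected and the graph is acyclic and layered, a source-to-sink path is exactly one node chosen independently from each level. The number of paths is therefore the product of the level sizes:
\[
|\mathcal{P}| \le m\,(\eta+1)^k .
\]
Equivalently, a path chooses a subset $F$ of features to split on (the rest are skipped), a value for each feature in $F$, and an action. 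The binomial theorem gives
\[
\sum_{F\subseteq[k]} \eta^{|F|} = (1+\eta)^k ,
\]
which matches the product count. I would present this second, rule-based derivation as well, because it ties the count to the interpretation of paths as rules.

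The second step converts $(\eta+1)^k$ into $\mathcal{O}(\eta^k)$, and this is the only point that needs care. I will take the asymptotics in $\eta$ with $k$ and $m$ treated as fixed constants, which fits the statement's convention that the number of actions is not part of the bound. Then
\[
(\eta+1)^k = \eta^k\,(1+1/\eta)^k \le 2^k \eta^k \qquad (\eta\ge 1),
\]
so $|\mathcal{P}| \le m\,2^k\,\eta^k = \mathcal{O}(\eta^k)$.

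The main obstacle is that if $k$ grows, the factor $(1+1/\eta)^k$ is not bounded by a constant unless $k=\mathcal{O}(\eta)$. I would handle this in a short remark. The bound holds cleanly as stated under the intended regime, namely $k$ fixed, or $\eta \gtrsim k$ since $(1+1/\eta)^k\le e^{k/\eta}$. In general the exact count is $m(\eta+1)^k$, which still has exponential growth in $k$, and that growth is the qualitative point motivating the depth constraints that follow.

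For the matching lower direction, to justify that the growth is tight rather than just an upper bound, I would note that when every feature has exactly $\eta$ values, the paths using no \emph{SKIP} nodes alone number $m\eta^k$. So $|\mathcal{P}| = \Theta(\eta^k)$ under the same conventions.
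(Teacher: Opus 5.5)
Your argument is correct under the convention you state ($k$ and $m$ held fixed), and it is more careful than the paper's.

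The paper's proof is a one-liner. Each of the $k$ features takes at most $\eta$ values, so there are at most $\eta^k$ feature--value combinations. Read literally, that inequality bounds only the full-depth rules, because it drops the \textsc{SKIP} nodes and the $m$ choices on the action level. To get from it to a bound on all of $\mathcal{P}$, you need exactly the step you supply: absorb $m(1+1/\eta)^k$ into the constant, which is legitimate for fixed $k$. The only alternative is to count \textsc{SKIP} as one of the $\eta$ values.

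Your route counts source-to-sink paths as a product of level sizes, giving $m(\eta+1)^k$, and cross-checks this with the binomial count over skipped feature subsets. That makes the hidden step explicit. Your lower bound $m\eta^k$ then shows the order is tight for fixed $k$.

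Your exact count also shows why the fixed-$k$ convention matters. In the regime the surrounding text emphasizes, $\eta$ fixed and $k$ growing, the statement as written fails. For $\eta=2$ the paper says the unconstrained space grows as $\mathcal{O}(2^k)$, but the true count is $m\,3^k$. The defensible general form is $|\mathcal{P}| = \mathcal{O}\bigl(m(\eta+1)^k\bigr)$. This still gives the exponential-in-$k$ growth that motivates the depth constraints.

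The paper's version buys brevity. Yours buys an exact count and a correct account of when $\mathcal{O}(\eta^k)$ actually holds.
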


Proposition~\ref{prop_feature_space} highlights the fundamental combinatorial
challenge faced by prescriptive policy optimization: the number of candidate
rules grows exponentially with the number of features, rendering exhaustive
enumeration infeasible even for moderate values of \(k\).

\paragraph{Depth-Constrained Policies.}
A depth constraint limits the number of features that a policy rule may
reference. When the depth of the decision tree is restricted to \(d\), each
feasible path may involve at most \(d\) feature--value assignments. Let
\(\mathcal{P}_d \subseteq \mathcal{P}\) denote the set of paths with at most \(d\)
active feature conditions.

\begin{proposition}[Size of the Depth-Constrained Path Space]
\label{prop_constrained_space}
If the depth of the decision tree is restricted to \(d\), the number of feasible
paths satisfies $
|\mathcal{P}_d|
= \mathcal{O}\!\left(\binom{k}{d}\eta^d\right)$.
\end{proposition}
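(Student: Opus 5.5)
We count the depth-constrained paths directly, in the same way as Proposition~\ref{prop_feature_space}. In the policy graph $G$, a source-to-sink path is determined by two choices: for each of the $k$ prescriptive feature levels, either a concrete value or the \emph{SKIP} node, and then a terminal action node at the action level. A path lies in $\mathcal{P}_d$ exactly when it takes a non-\emph{SKIP} node on at most $d$ of the $k$ feature levels.

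\textbf{Exact count.} Fix the set $A\subseteq[k]$ of active features, with $|A|=j$. Every level outside $A$ is forced to \emph{SKIP}. Each level in $A$ offers at most $\eta$ values, so there are at most $\eta^{j}$ completions. The action level multiplies this by at most $m$. Summing over the choices of $A$ gives
\[
|\mathcal{P}_d| \le m\sum_{j=0}^{d}\binom{k}{j}\eta^{j}.
\]

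\textbf{Reduction to the leading term.} We treat the action cardinality $m$ as a constant, just as it is implicitly absorbed in Proposition~\ref{prop_feature_space}. The remaining task is to show that $\sum_{j=0}^{d}\binom{k}{j}\eta^{j}=\mathcal{O}\bigl(\binom{k}{d}\eta^{d}\bigr)$. We compare consecutive terms. For $j<d$,
\[
\frac{\binom{k}{j}\eta^{j}}{\binom{k}{j+1}\eta^{j+1}}=\frac{j+1}{(k-j)\eta}.
\]
When $d\le k/2$ and $\eta\ge 2$, this ratio is at most $\tfrac{d}{(k-d)\eta}\le \tfrac12$. The sum is then dominated by a geometric series, giving a total of at most $2\binom{k}{d}\eta^{d}$.

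\textbf{Main obstacle.} The difficulty is the regime where this ratio bound fails, namely $d$ close to $k$ or $\eta=1$. There $\binom{k}{d}$ can be small (for example $\binom{k}{k}=1$) while lower-order terms dominate. A literal $\mathcal{O}\bigl(\binom{k}{d}\eta^d\bigr)$ bound is then false uniformly in all parameters. I would handle this by stating the intended regime explicitly: $d$ is a fixed small constant, much smaller than $k$, and $\eta\ge2$. This is exactly the operational setting motivating depth constraints in this section. In that regime the geometric argument applies. I would also note the uniformly valid alternative bound $|\mathcal{P}_d|\le m\,(d+1)\max_{j\le d}\binom{k}{j}\eta^{j}$, which equals $\mathcal{O}\bigl(\binom{k}{d}\eta^d\bigr)$ whenever the $j=d$ term is maximal. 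Finally, I would remark that for fixed $d$ this is $\mathcal{O}\bigl((k\eta)^d\bigr)$, which is polynomial in $k$, in contrast to the exponential count in Proposition~\ref{prop_feature_space}.
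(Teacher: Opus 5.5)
Your proof is correct. It rests on the same count as the paper's: choose the set of active features, then assign their values. It is, however, more careful than the paper's own argument. The paper counts only the $\binom{k}{d}$ subsets of size exactly $d$ and ignores the action level, so its displayed inequality $|\mathcal{P}_d|\le\binom{k}{d}\eta^d$ is not literally true. With $k=2$, $\eta=2$, $d=1$ and a single action there are $1+4=5$ paths, against a bound of $4$. Your sum over $j\le d$ and the factor $m$ repair this, and your comparison of consecutive terms then recovers the stated order.

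One refinement. Your regime assumption ($d\le k/2$, $\eta\ge 2$) is stronger than needed, and $\eta=1$ is not a real obstacle. For $j\le d\le k$ you have $\binom{k}{d}\binom{d}{j}=\binom{k}{j}\binom{k-j}{d-j}\ge\binom{k}{j}$; equivalently, pad a shorter path's active set canonically with \emph{SKIP} levels up to size $d$. This gives $\sum_{j\le d}\binom{k}{j}\eta^j\le\binom{k}{d}(\eta+1)^d\le 2^d\binom{k}{d}\eta^d$ for all $k\ge d$ and $\eta\ge 1$. So the only genuine non-uniformity is in $d$ itself. For fixed $d$, which is the setting in which Proposition~\ref{prop_lp_upper_bound} uses this result, $|\mathcal{P}_d|\le m\,2^d\binom{k}{d}\eta^d$ holds with no further restriction.
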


Proposition~\ref{prop_constrained_space} shows that depth constraints
simultaneously enforce operational simplicity and dramatically reduce the size
of the policy search space.  For example, with binary features ($\eta=2$) and depth $d=2$, the unconstrained
space grows as $\mathcal{O}(2^k)$, whereas the depth-constrained space grows as
$\mathcal{O}(k^2)$.

\paragraph{Optimization Implications of Depth Constraints.}
Depth constraints also yield favorable computational properties: for fixed \(d\)
and bounded \(\eta\), the LP relaxation of the depth-constrained \textsc{COAT}
model has polynomial size and can be solved in polynomial time.

\begin{proposition}[Polynomial-Time LP Relaxation for Fixed Depth]
\label{prop_lp_upper_bound}
For fixed depth \(d\) and bounded feature cardinality \(\eta\), the LP relaxation
of the depth-constrained \textsc{COAT} formulation can be solved in time
polynomial in \(k\) and \(n\).
\end{proposition}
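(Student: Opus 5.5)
The argument is a size count followed by an appeal to polynomial-time linear programming; column generation is not needed. First, bound the number of columns of the depth-constrained formulation. By Proposition~\ref{prop_constrained_space}, $|\mathcal{P}_d| = \mathcal{O}\bigl(\binom{k}{d}\eta^d\bigr)$, up to a factor $m$ for the action level. For fixed $d$ and bounded $\eta$ this is $\mathcal{O}(m\,k^d)$, polynomial in $k$. The number of actions $m$ is treated as part of the input size, and as fixed in the application.

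Next, write the LP relaxation explicitly over all of $\mathcal{P}_d$. It has $|\mathcal{P}_d|$ variables $z_j\in[0,1]$ and $n$ slacks $s_i$. It has $n$ coverage rows, one tree-size row, and, if present, a fixed number $M$ of inter-rule rows. The total dimension is therefore polynomial in $k$ and $n$.

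Then check that the data can be built in polynomial time.
\begin{itemize}
\item Each path $j$ is enumerated by choosing at most $d$ features, their values, and an action.
\item The incidence column $(a_{ij})_i$ is computed by testing each sample against at most $d$ equality conditions, in $\mathcal{O}(nd)$ time.
\item The reward $r_j=\sum_i a_{ij}g_{i,t_j}$ is computed in $\mathcal{O}(n)$ time, given the precomputed $g_{i,t}$.
\end{itemize}
Building the whole instance thus takes $\mathcal{O}(|\mathcal{P}_d|\, n d)$ time.

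The encoding length also stays polynomial. The matrix entries are $0/1$, and the right-hand sides are $1$ and $l\le |\mathcal{P}_d|$. The objective coefficients $r_j$ and $c_i$ have bit length polynomial in the input bit length of the $g_{i,t}$. Here $c_i$ can be chosen as, for example, $1+\sum_t |g_{i,t}|$ or any polynomially encoded bound.

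Finally, invoke the ellipsoid or an interior-point method (Khachiyan; Karmarkar). These solve an LP in time polynomial in the number of variables, the number of constraints, and the encoding length $L$. Composing these polynomials gives a total running time polynomial in $k$ and $n$.

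As a remark, the same bound shows that column generation on $\mathcal{P}_d$ has a polynomial pricing step. The pricing step can enumerate all paths and evaluate $\pi_j$ from \eqref{eq_reduced_cost}. This is not needed for the claim.

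The main obstacle is stating precisely what ``polynomial'' means. That means fixing that $d$ and $\eta$ are constants and $m$ is part of the input. It also means confirming that the choice of $c_i$ does not blow up the bit length. Once the instance has polynomial dimension and polynomial encoding length, the rest is routine.
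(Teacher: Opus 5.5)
Your proposal is correct and follows the paper's argument: bound $|\mathcal{P}_d|$ via Proposition~\ref{prop_constrained_space}, observe that the LP relaxation then has polynomially many variables and constraints, and invoke Khachiyan/Karmarkar; your extra bookkeeping (the factor $m$ for the action level, the $\mathcal{O}(|\mathcal{P}_d|\,nd)$ cost of building the columns, and the encoding length) makes explicit what the paper leaves implicit. One aside is inaccurate but harmless: $c_i = 1+\sum_t |g_{i,t}|$ is not in general large enough to force zero slack (take one feature with three values, one sample per value, actions $a,b$, $l=2$, $g_{1,a}=g_{2,b}=100$, $g_{1,b}=g_{2,a}=-100$, $g_{3,a}=g_{3,b}=0$; the two single-value paths with actions $a$ and $b$ plus unit slack on sample~$3$ score $199$, whereas the only exact cover with at most two leaves, the all-\textsc{SKIP} path, scores $0$), but the proposition only needs the $c_i$ to have polynomial encoding length, so your proof is unaffected.
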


Depth constraints therefore serve a dual role: they encode operational simplicity
while placing policy optimization within a computationally tractable regime.

\subsection{Exact State Aggregation in Policy Prescription}
\label{sec:state_aggregation}

A key advantage of the \textsc{COAT} framework is its decoupled design: the
counterfactual outcome estimator may depend on the full feature vector
\(X \in \mathbb{R}^k\), while the prescriptive policy is restricted to a smaller,
operationally feasible subset of features. This separation reflects practical
deployment constraints, as many predictive features may be unavailable or 
inadmissible at decision time.

In the airline pricing application (Section~\ref{sect_case_study}), purchase
probabilities were estimated using a rich set of booking, flight, and passenger
attributes, while the prescriptive pricing policy was restricted to a small,
interpretable feature set consistent with regulatory fare filing rules. In other
applications, prescriptive features must also be available in real time, as
recommendations need to be computed and displayed instantaneously when a customer
searches for a product. Restricting the policy to real-time–accessible features
enabled low-latency deployment while still leveraging rich upstream predictive
models.

This asymmetry implies that many observations share identical prescriptive
features but differ only in predictive attributes. Such observations are
\emph{decision-equivalent}: they satisfy the same set of feasible policy rules and
differ only in their contribution to the objective. They can therefore be
aggregated without loss of feasibility or optimality, allowing the policy
optimization problem to be reformulated exactly over unique prescriptive feature
profiles and yielding a strictly smaller but equivalent problem.

Decision-equivalence does not imply outcome-equivalence. Although observations
within the same group satisfy the same prescriptive rules, their counterfactual
rewards may differ because the estimator \(f(X,t)\) is evaluated on the full
feature vector \(X\), not on the reduced prescriptive representation \(Z\).
Aggregation thus preserves heterogeneity in expected outcomes while restricting
only the structure of the decision policy. This distinction is central to
\textsc{COAT}: predictive models can exploit rich feature sets without constraining
the interpretability or deployability of the learned policy.

\paragraph{Grouping by Prescriptive Feature Profiles.}
Let \( Z_i := R(X_i) \in \Omega \) denote the prescriptive feature vector, where
\( R(\cdot) \) restricts \( X_i \) to features available at decision time and
\( k' \ll k \). Define the equivalence relation \( i \sim i' \) if and only if
\( Z_i = Z_{i'} \), which partitions the dataset into groups
\( \mathcal{G}_1,\ldots,\mathcal{G}_{N'} \), where
\( N' = \lvert \{ Z_i : i=1,\ldots,n \} \rvert \).

Because candidate policy rules depend only on \( Z \), all observations within a
group \( \mathcal{G}_{i'} \) satisfy the same set of policy paths, implying that
the coverage indicator is constant within each group. We therefore define
\( a_{i'j} := a_{ij} \) for any \( i \in \mathcal{G}_{i'} \).

For each group \( i' \), define the aggregated penalty
\( c_{i'} := \sum_{i \in \mathcal{G}_{i'}} c_i \) and aggregated reward under policy
\( j \) as \( r_{i'j} := \sum_{i \in \mathcal{G}_{i'}} g_{i,t_j} \). The total reward
of policy \( j \) can then be written as
\( r_j = \sum_{i'=1}^{N'} a_{i'j} r_{i'j} = \sum_{i=1}^{n} a_{ij} g_{i,t_j} \),
which coincides exactly with the original definition. Aggregation therefore
preserves both feasibility and objective value for every feasible policy. The
full reduced mixed-integer formulation and proof of equivalence are provided in
Section~\ref{ec:state_aggregation}.

\paragraph{Computational Implications}

All computational gains arise from an exact reduction in problem dimension rather
than approximation. In the original formulation, the coverage constraints take
the form \( A z + s = \mathbf{1} \), with
\( A \in \{0,1\}^{n \times |\mathcal{P}|} \). After aggregation, the reduced
formulation becomes \( A' z + s' = \mathbf{1} \), where
\( A' \in \{0,1\}^{N' \times |\mathcal{P}|} \), with the same decision variables
\( z \) but substantially fewer constraints since \( N' \ll n \).

This reduction improves the conditioning of LP relaxations and accelerates column
generation: the dual vector is shorter, reduced-cost computations are cheaper,
and pricing problems become smaller. The final branch-and-bound phase also
benefits from fewer constraints and tighter relaxations.

\begin{theorem}[Reduction in Problem Size from Prescriptive Dimensionality]
\label{thm:reduction_speedup}
Let the dataset contain \(n\) samples and \(k\) categorical features, each taking
at most \(\eta\) distinct values. Suppose the prescriptive feature set is
restricted to \(k' < k\) features. Then:
\begin{enumerate}
\item The number of unique prescriptive feature profiles satisfies
$
N' \le \eta^{k'}$.
\item Consequently, the reduced \textsc{COAT} formulation contains at most
\(\eta^{k'}\) coverage constraints. In particular, when \(n \ge \eta^{k'}\), the
number of coverage constraints is independent of \(n\), and the problem size is
reduced by at least
$\frac{n}{N'} \ge \frac{n}{\eta^{k'}}$.
\end{enumerate}
\end{theorem}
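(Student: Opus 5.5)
The argument is a counting argument on the image of the restriction map \(R\), combined with the exact aggregation established above.

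For part 1, note that each prescriptive profile \(Z_i = R(X_i)\) lies in the product of the value sets of the \(k'\) retained features. Each of these features takes at most \(\eta\) distinct values, so this product has cardinality at most \(\eta^{k'}\). By definition, \(N' = |\{Z_i : i = 1,\ldots,n\}|\) is the size of the image of the finite sample under \(R\), which is a subset of that product. Hence \(N' \le \eta^{k'}\). Trivially we also have \(N' \le n\), so in fact \(N' \le \min\{n, \eta^{k'}\}\).

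For part 2, I would invoke the aggregated formulation of Section~\ref{sec:state_aggregation} (and its equivalence proof in the companion). There, the coverage constraints \(Az + s = \mathbf{1}\) with \(n\) rows are replaced by \(A'z + s' = \mathbf{1}\) with exactly one row per group \(\mathcal{G}_{i'}\). Concretely:
\begin{enumerate}
\item Because \(a_{ij}\) is constant within each group, the \(n\) original rows collapse to \(N'\) distinct constraint rows.
\item The path variables \(z\) are unchanged.
\item The slack variables drop from \(n\) to \(N'\).
\end{enumerate}
By part 1, the number of coverage constraints is at most \(\eta^{k'}\). This bound depends only on \(\eta\) and \(k'\), so it is independent of \(n\). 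The interesting regime is \(n \ge \eta^{k'}\), where this bound is binding relative to \(n\).

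The reduction factor in the coverage block (rows and slacks) is \(n/N'\). Since \(N' \le \eta^{k'}\), we get \(n/N' \ge n/\eta^{k'}\), and this is at least \(1\) exactly when \(n \ge \eta^{k'}\).

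The only delicate point is interpreting ``problem size''. The tree-size constraint and any additional inter-rule constraints in \(\mathcal{Z}\) are unaffected by aggregation, and so are the columns. I will therefore state the reduction as applying to the coverage constraints and their slack variables, which is the component that scales with \(n\). The remaining constraints are \(O(1)\) in \(n\), so they do not change the asymptotic ratio.

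Validity of the reduction rests entirely on the earlier exact-equivalence argument. The aggregated rewards and penalties \(r_{i'j}, c_{i'}\) reproduce \(r_j\) and the slack penalty for every feasible \(z\). For the slacks, one maps each \(s'_{i'}\) back to the common value \(s_i\) for \(i \in \mathcal{G}_{i'}\); this is forced because all rows in a group are identical. I therefore take the equivalence as given from Section~\ref{ec:state_aggregation}.
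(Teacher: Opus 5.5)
Your proposal is correct and follows the paper's own argument. You bound $N'$ by the cardinality of the product of the $k'$ retained value sets to get $N'\le\eta^{k'}$, then substitute this into $n/N'$ to obtain the stated reduction factor. Your extra care is a sound sharpening of what the paper leaves implicit: you confine ``problem size'' to the coverage rows and their slacks, and you map each grouped slack $s'_{i'}$ to the common per-sample value $s_i$. That common value is what the aggregated penalty $c_{i'}=\sum_{i\in\mathcal{G}_{i'}}c_i$ actually requires, rather than the sum of the $s_i$ used in the companion's equivalence sketch.
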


Theorem~\ref{thm:reduction_speedup} shows that the number of coverage constraints
in the prescriptive optimization problem scales with the \emph{number of distinct
prescriptive states}, rather than with the total number of samples. Once the
dataset is sufficiently large to populate all feasible prescriptive feature
combinations, the number of coverage constraints---and hence the row dimension of
the master problem---becomes independent of \(n\). The number of candidate columns
\(|\mathcal{P}|\) and the size of the pricing subproblem are instead governed by the
policy-graph dimensions and are unaffected by this reduction.

\remark{
Aggregation is not unique to \textsc{COAT}: arc-based tree formulations may also
collapse observations with identical feature vectors. However, such aggregation
typically occurs after the tree structure is fixed and does not reduce the
underlying structural complexity of the model. In contrast, aggregation in
\textsc{COAT} operates directly at the level of prescriptive states prior to
policy construction, yielding an exact reduction in both data dimension and
effective search space.
}

\subsection{Data Degeneracy and Reachable Prescriptive States}
\label{subsec:data_degeneracy}

In operational settings, the realized data typically occupies only a small and
highly structured subset of the combinatorial prescriptive feature space. Let
\( Z \in \Omega \) denote the prescriptive feature vector, and define the
empirical support induced by the observed data as
 $\widehat{\Omega} := \{ Z_i : i = 1,\dots,n \} \subseteq \Omega$.

In practice, \( |\widehat{\Omega}| \ll |\Omega| \), reflecting dependencies among
features, business rules, instrumentation constraints, and regulatory
requirements. We refer to this phenomenon as \emph{data degeneracy}.

In the \textsc{COAT} framework, prescriptive policies are represented as paths in
a policy graph, where each path corresponds to a (possibly partial) assignment
of prescriptive features. Let \( \mathcal{S}_\ell \) denote the set of all
prescriptive states at depth \( \ell \), i.e., partial assignments involving
\( \ell \) feature--value conditions. We define the set of \emph{reachable}
prescriptive states as
$\widehat{\mathcal{S}}_\ell
:= \bigl\{ s \in \mathcal{S}_\ell : \exists\, i \text{ such that } Z_i
\text{ satisfies } s \bigr\}$.
States outside \( \widehat{\mathcal{S}}_\ell \) have zero empirical support and
cannot be satisfied by any observed sample.

Consequently, the policy search space can be restricted to paths that are
empirically reachable. Let \( \mathcal{P}_d \) denote the set of all depth-\(d\)
paths in the policy graph, and define the subset of reachable paths as
$\widehat{\mathcal{P}}_d
:= \bigl\{ p \in \mathcal{P}_d : \exists\, i \text{ such that } Z_i
\text{ satisfies } p \bigr\}$.
Paths containing unsupported partial assignments are excluded by construction.
As a result, the effective policy space satisfies $|\widehat{\mathcal{P}}_d| \ll |\mathcal{P}_d| \le |\mathcal{P}|$.

This reduction is further reinforced by deployment-driven modeling constraints.
Limits on tree depth restrict the complexity of individual rules, prescriptive
feature restrictions confine attention to deployable attributes, and
cardinality constraints bound the number of selected paths. Together, these
restrictions ensure that the optimization problem scales with the number of
\emph{reachable, decision-relevant prescriptive states}, rather than with the
size of the ambient feature space.

From a computational perspective, data degeneracy aligns naturally with column
generation. Pricing subproblems and reduced-cost computations are performed only
over \( \widehat{\mathcal{P}}_d \), and columns corresponding to unsupported paths
are never generated. This tight coupling between empirical support and policy
representation is a key reason why \textsc{COAT} remains tractable in
high-volume production environments, even when the upstream counterfactual
models operate over high-dimensional feature spaces.

\subsection{Representational Choices: Path-Based and Arc-Based Trees}
\label{sec:arc_vs_path}

The flexibility to incorporate operational constraints, the ability to perform
exact state aggregation, and the favorable interaction with data degeneracy are
not independent features of the \textsc{COAT} framework. Rather, they arise
directly from the underlying \emph{path-based} representation of prescriptive
policies. To clarify the role of this modeling choice, we contrast the
path-based formulation with traditional \emph{arc-based} decision tree
formulations and discuss the resulting design implications.


\paragraph{Equivalence in an Unrestricted Binary Setting.}
We first consider an unrestricted binary setting, in which all \(2^k\) feature
assignments can be represented.
 In this unrestricted setting, both arc-based and path-based
formulations can represent any mapping
\(\{0,1\}^k \rightarrow \mathcal{A}\), where \(\mathcal{A}\) denotes the action
set. Although the resulting tree structures and symbolic rules may differ, both
formulations can achieve the same optimal objective value.

\paragraph{Divergence under Structural Constraints.}
This equivalence is fragile and breaks down once structural constraints are
introduced. In an arc-based formulation, a depth constraint implicitly requires
simultaneous decisions over \emph{which} features appear along a path and the
\emph{order} in which those features are tested. As a result, depth constraints
induce a combinatorial search over both feature subsets and feature permutations.

In contrast, the path-based formulation underlying \textsc{COAT} imposes a fixed
canonical ordering of prescriptive features. A path may include or skip features,
but their order is predetermined. Depth constraints therefore limit only the
number of feature conditions appearing in a rule, without introducing additional
ordering decisions. This interpretation aligns rule depth directly with decision
complexity and operational simplicity, rather than with tree geometry.

\paragraph{Multiway Representation of Categorical Features.}
The treatment of categorical features further distinguishes arc-based and
path-based representations. In binary-split trees, a split on a categorical
feature requires selecting a subset of category values, introducing substantial
combinatorial complexity and often producing rules that are difficult to
interpret, audit, or justify operationally.

By contrast, \textsc{COAT} employs equality-based tests of the form
\(X_f = v\), yielding a fixed and transparent set of atomic conditions per
categorical feature. Multiway branching is realized implicitly through distinct
paths rather than explicit subset splits, eliminating unnecessary structural
ambiguity.

\begin{proposition}[Split Enumeration for Categorical Features]
\label{prop:multiway_split_complexity}
For a categorical feature with \(\eta\) distinct values, a binary-split tree
admits \(2^{\eta-1}-1\) distinct nontrivial splits, whereas the equality-based
multiway representation used in \textsc{COAT} admits exactly \(\eta\) atomic tests
of the form \(X_f = v\).
\end{proposition}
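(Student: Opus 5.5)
The plan is a direct counting argument with one part for each representation. Let $V=\{v_1,\dots,v_\eta\}$ be the value set of the categorical feature $X_f$. A binary split sends each value to the left or the right child, so it is a partition of $V$ into two parts $\{L, V\setminus L\}$. For the split to be nontrivial, both parts must be nonempty. The first step is to make precise when two splits are \emph{distinct}: they are the same exactly when they induce the same unordered bipartition of $V$. Swapping the left and right children only relabels the branches and does not change the partition of the samples.

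With that convention fixed, the count follows in three steps.

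\begin{enumerate}
\item There are $2^\eta$ subsets $L\subseteq V$.
\item Removing $L=\emptyset$ and $L=V$, which give trivial splits, leaves $2^\eta-2$ ordered nontrivial splits.
\item The map $L\mapsto V\setminus L$ is a fixed-point-free involution on these subsets, so the unordered bipartitions number $(2^\eta-2)/2=2^{\eta-1}-1$.
\end{enumerate}

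An equivalent check: fix $v_1$ on the left, choose any subset of the remaining $\eta-1$ values to join it, and exclude the single choice in which all of them do. This again gives $2^{\eta-1}-1$.

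For the \textsc{COAT} side, I would appeal to the construction of the policy graph in Section~\ref{sect_rule_space}. The level corresponding to $X_f$ has one node per value $v\in V$, together with a \emph{SKIP} node. A path through the value node $v$ imposes exactly the atomic condition $X_f=v$. A path through \emph{SKIP} imposes no condition on $X_f$ and so is not a test on this feature. The atomic tests on $X_f$ are therefore in bijection with $V$, and there are exactly $\eta$ of them. A multiway split is realized by the family of paths through these nodes, not by choosing a subset.

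I expect the only delicate step to be the bookkeeping of ``distinct'' and ``nontrivial''. This means stating the unordered-bipartition convention explicitly, so the count is $2^{\eta-1}-1$ rather than $2^\eta-2$. It also means noting that \emph{SKIP} is excluded because it is not a test. Once these are fixed, everything else is elementary combinatorics. As a sanity check, for $\eta=2$ both counts give a single binary split and two equality tests.
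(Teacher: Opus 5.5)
Your proposal is correct and follows the paper's own proof: you count the $2^\eta-2$ nonempty proper subsets, identify each with its complement to get $2^{\eta-1}-1$ unordered bipartitions, and note that the equality tests are in bijection with the $\eta$ values. Your explicit statement of the unordered-bipartition convention and your exclusion of the \emph{SKIP} node simply make the same argument slightly more careful.
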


While Proposition~\ref{prop:multiway_split_complexity} highlights the combinatorial
explosion of categorical splits in binary trees, the next result formalizes how this combinatorial complexity translates into
additional depth in binary-split trees.

\begin{proposition}[Depth Overhead for Categorical Features in Binary trees]
\label{prop:depth_overhead_mip}
Consider $k$ categorical prescriptive features, each taking at most $\eta$ distinct
values. Let $\mathcal{T}_M$ be a multiway prescriptive tree in which each root--to--leaf
path tests at most $d$ features using equality conditions of the form $X_f = v$.
Then there exists a binary-split decision tree $\mathcal{T}_B$ that implements the
same policy with depth at most $d\lceil \log_2 \eta \rceil$. Conversely, there exist prescriptive policies over $\{1,\dots,\eta\}^k$ such that
any binary-split decision tree representation requires depth at least
$\lceil k \log_2 \eta \rceil$.
\end{proposition}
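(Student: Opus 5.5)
We prove the two directions separately: an explicit simulation for the upper bound, and a counting argument for the lower bound.

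\emph{Upper bound.} Fix a value encoding for each categorical feature $f$, i.e., an injective map $v\mapsto b_f(v)\in\{0,1\}^{\lceil\log_2\eta\rceil}$. Replace every internal node of $\mathcal{T}_M$ that branches multiway on $X_f$ by a complete binary ``decoding gadget''. The gadget tests the bits of $b_f(X_f)$ one at a time, each test being the subset split $X_f\in\{v: b_f(v)_r=1\}$, so it is a legitimate binary split on a categorical feature.

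\begin{itemize}
\item The gadget has depth $\lceil\log_2\eta\rceil$.
\item Its $2^{\lceil\log_2\eta\rceil}\ge\eta$ leaves are in bijection with the possible values, plus some unused codes. Each leaf corresponding to value $v$ is connected to the subtree that $\mathcal{T}_M$ attaches to the branch $X_f=v$.
\item Unused codes and values with no child in $\mathcal{T}_M$ are routed to the default action.
\end{itemize}

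A path of $\mathcal{T}_M$ has at most $d$ feature tests. Its image in $\mathcal{T}_B$ is therefore a concatenation of at most $d$ gadgets, giving depth at most $d\lceil\log_2\eta\rceil$. Correctness is by induction on the depth of $\mathcal{T}_M$: each sample $x$ descends a gadget exactly to the branch labelled $x_f$. Hence $x$ reaches the leaf corresponding to the unique $\mathcal{T}_M$ leaf it satisfies, and the implemented policy is the same. The fixed canonical ordering of the path representation is irrelevant to this argument.

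\emph{Lower bound.} We use a counting argument with the bijective policy. Take $|\mathcal{A}|\ge\eta^k$, or more generally a policy $\tau$ assigning pairwise distinct actions to all $\eta^k$ points of $\{1,\dots,\eta\}^k$; if the action set is fixed to be small, a parity-type policy replaces this, as discussed below. In any binary-split tree implementing $\tau$, two distinct points cannot share a leaf, because a leaf prescribes a single action. So the tree has at least $\eta^k$ leaves. A binary tree of depth $D$ has at most $2^D$ leaves, hence $2^D\ge\eta^k$, i.e.\ $D\ge k\log_2\eta$, and since $D$ is an integer, $D\ge\lceil k\log_2\eta\rceil$.

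The main point to check is that ``binary-split'' allows arbitrary subset splits $X_f\in S$. These only strengthen the tree, and the argument above uses nothing but the fact that each node has two children, so the bound holds for every split family.

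\emph{Main obstacle.} The only delicate step is ensuring the constructed policy is a legitimate prescriptive policy. If the action set $[m]$ is small, $m<\eta^k$, distinct actions per point are unavailable. In that case I would switch to a policy such as $\tau(x)=\bigl(\sum_f x_f\bigr)\bmod 2$ and argue that every leaf region, which is a product of value subsets, must be monochromatic. One then has to show that such regions are forced to be single points, since any product box containing two points that differ in one coordinate mixes parities. This recovers $\eta^k$ leaves only when $\eta=2$, so for general $\eta$ I would instead use $\tau(x)=\bigl(\sum_f x_f\bigr)\bmod \eta$ with $m\ge\eta$. The key fact is that a box on which this sum is constant mod $\eta$ must be a singleton, because varying a single coordinate changes the sum. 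This again forces $\eta^k$ leaves and gives the same bound.
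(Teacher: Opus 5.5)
Your proof is correct and follows the paper's argument. For the upper bound, each $\eta$-way equality node is replaced by a $\lceil\log_2\eta\rceil$-deep bit-decoding binary subtree; for the lower bound, a policy with distinct labels on all $\eta^k$ profiles forces $2^D\ge\eta^k$. Your $\bigl(\sum_f x_f\bigr)\bmod\eta$ policy for small action sets (valid when $m\ge\eta$, because single-feature splits make every leaf region a product box) is actually a more rigorous treatment of a case the paper only sketches in its footnote about ``merging labels.''
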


Proposition~\ref{prop:depth_overhead_mip} shows that depth in arc-based binary
tree formulations conflates decision complexity with categorical encoding,
whereas depth in the \textsc{COAT} framework directly reflects the number of
prescriptive conditions in a rule.

\paragraph{Implications for Scalable and Deployable Policy Optimization.}
Taken together, these distinctions show that arc-based and path-based
formulations coincide only in a narrow, unrestricted setting. Once depth limits,
categorical features, or operational constraints are introduced, the path-based formulation used in
\textsc{COAT} offers a more structured and interpretable representation. By fixing
feature order, avoiding subset-based categorical splits, and operating directly
on decision-relevant paths, the formulation complements the exact state
aggregation and data degeneracy properties discussed earlier. These design
choices help explain why \textsc{COAT} remains scalable and practical in
large-scale operational environments.

\section{Operational Deployment in Airline Ancillary Pricing}
\label{sect_case_study}

This section presents a real-world deployment of the \textsc{COAT} framework. We describe the operational constraints that shaped the modeling
choices, present the system architecture, and report results from a live pilot with a global airline, 
evaluated using the synthetic control method.

\subsection{Background and Deployment Challenges}
\label{subsec:ancillary_background}

Ancillary products, such as premium seats, baggage, meals, and in-flight WiFi, have
become an increasingly important revenue stream for airlines, reaching a record
\$117.9 billion globally in 2023 \citep{ideaworks2023}. While airlines have long
employed sophisticated RM systems to dynamically optimize
economy fares \citep{talluri2006theory,phillips2021pricing}, ancillary pricing has
historically relied on static, manually crafted rules, representing a significant
missed opportunity for data-driven and adaptive optimization
\citep{holloway2016straight,McKinsey2017}.

In partnership with a major global airline, we explored how ancillary pricing could
be modernized while respecting operational and regulatory constraints. A central
requirement is compliance with industry-standard \emph{fare filing} rules governed
by the Airline Tariff Publishing Company (ATPCO), through which airlines publish
pricing logic to global distribution systems and online travel agencies such as
Expedia and Travelocity \citep{APTCO_pricing}.

The ATPCO framework supports a standardized but limited set of attributes for
defining pricing logic, including origin--destination pairs, booking windows,
departure time bands, fare restrictions, and stay duration limits
\citep{APTCO_farerule}. As a result, ancillary pricing policies must be expressed
as transparent, rule-based logic over a constrained feature space. A typical rule
takes the form:
\begin{quote}
\emph{``If the flight is between Airports A and B, the ticket was purchased within
7 days of departure, and the departure time is on a weekday before 11 a.m.
$\Rightarrow$ set the upsell price for first-class seats to \$650.''}
\end{quote}

Beyond fare filing, ancillary pricing deployments face three recurring challenges:
\emph{(i)} scalability with interpretability in large-scale, regulated settings;
\emph{(ii)} adherence to complex business constraints, such as monotonicity,
consistency, and fairness across customer segments; and \emph{(iii)} integration
with existing RM infrastructure, where deeply embedded systems make invasive
changes operationally risky and long-term robustness to upstream model changes
essential.

Taken together, these considerations imply that a successful ancillary pricing
solution must be interpretable, constraint-aware, system-compatible, and decoupled
from specific predictive architectures.

\subsection{High-Level Architecture}
\label{subsec:system_architecture}

\begin{figure}[htbp]
    \centering
    \includegraphics[width=0.9\textwidth]{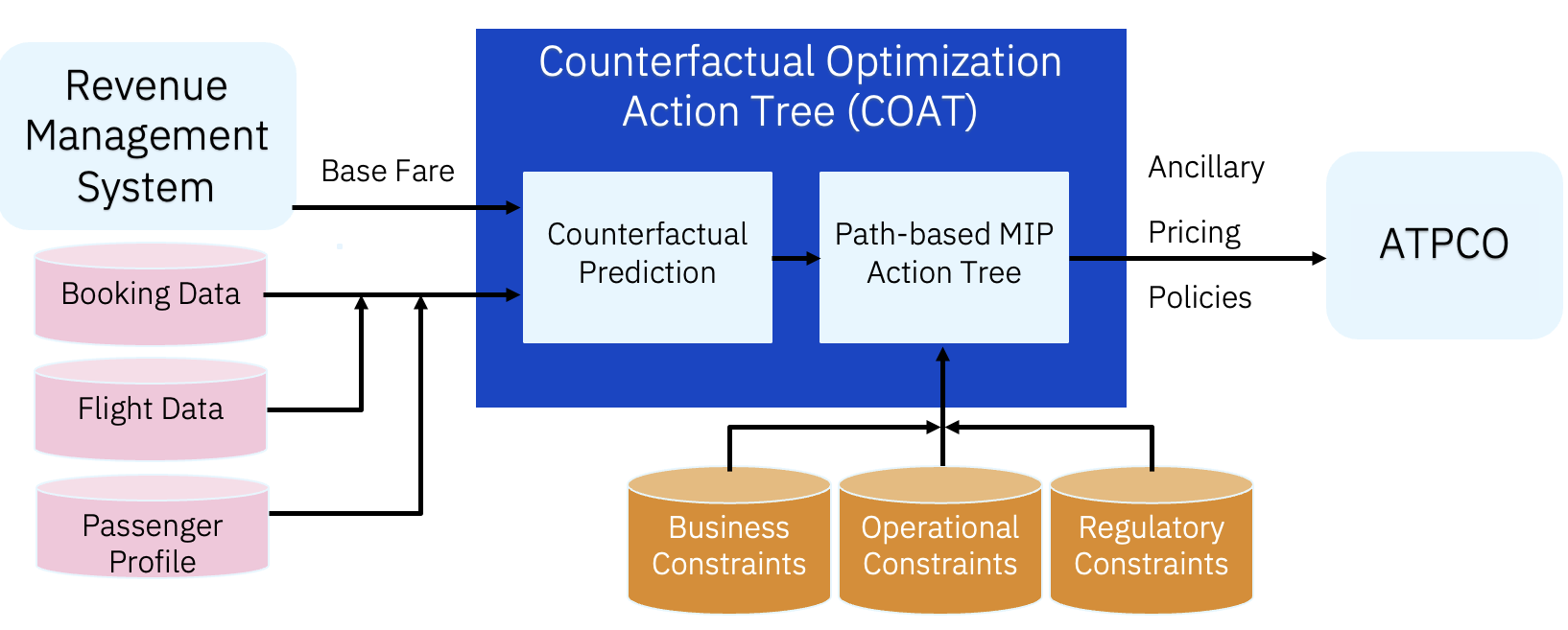}
    \caption{System architecture for airline ancillary pricing optimization layer}
    \label{fig:sytem_overview}
\end{figure}

Figure~\ref{fig:sytem_overview} illustrates the system architecture of the proposed solution. Rather than optimizing absolute ancillary prices, we
treat the economy fare produced by the airline’s RM system as an input and
formulate the decision problem as selecting an optimal \emph{markup} over this
base fare. This design anchors ancillary pricing to the existing fare structure
while preserving core RM workflows.

Formulating the problem in terms of upsell markups enables a modular pricing layer
that can be deployed on top of existing RM infrastructure with minimal engineering
complexity and operational risk. It also ensures that prescribed prices remain
interpretable and consistent with published fare rules, facilitating governance
and ATPCO compliance.

The ancillary pricing policy is learned using the \textsc{COAT} framework
introduced earlier in the paper. Here, \textsc{COAT} acts as a
prescriptive optimization layer that takes counterfactual demand estimates as
inputs and outputs a compact, rule-based pricing policy represented as a multiway
policy tree. The policy depends only on a small set of deployable features,
supporting real-time execution and regulatory compliance, while remaining
agnostic to the upstream predictive model.

We next describe the end-to-end deployment of this architecture for pricing seat
upgrades (e.g., premium economy and first-class seating). The same design extends
naturally to other ancillary products, such as baggage, meals, and in-flight
WiFi, by modifying the feature set and action space.

\subsection{Data}

We had access to several years of transactional booking data spanning early 2019
through the most recent period. For model training, we restrict attention to data
from mid-2022 onward to align with current market conditions, reflecting
post–COVID-19 structural shifts in travel behavior and demand patterns and
ensuring consistency with the pilot deployment period. The resulting training
corpus, spanning mid-2022 through mid-2023, comprises approximately 43 million
individual transaction records and is used to fit the counterfactual outcome and
propensity models. The pilot itself is evaluated on the test and control markets
described in Section~\ref{sect:pilot}.

For the seat-upgrade pricing use case, each observation is a triplet
\( \{X,T,Y\} \), where \( X \) denotes the booking context, \( T \) is the upsell
\emph{markup} (the price difference between the premium seat and the corresponding
economy fare), and \( Y \in \{0,1\} \) indicates whether the customer accepted the
offer.

The feature vector \( X \) integrates information from three sources, providing a
rich representation of the pricing context and supporting demand modeling for
seat upgrades.

\paragraph{Booking-related features}
capture booking timing and channel characteristics, including calendar
indicators, book-to-flight time, trip type, weekend stays, sales channel, premium
seat availability at booking, and the economy fare, which serves as the reference
price for optimizing the markup.

\paragraph{Flight-level features}
describe route and schedule attributes, including origin--destination pairs,
flight distance and duration, departure date and time, time-of-day indicators,
and seasonality variables capturing holidays, weekends, and peak travel periods.

\paragraph{Passenger-level features}
reflect the customer’s relationship with the airline and past behavior, including
loyalty program enrollment, tier status, frequent flyer indicators, and summary
measures of recency, frequency, and cumulative spend.

\subsection{Counterfactual Outcome Estimation}

The first stage of the COAT framework estimates counterfactual outcomes, namely,
the probability that a customer would purchase an upgrade if offered an
alternative markup. 
Because upgrade prices are not randomly assigned, higher prices are often deployed
in contexts with anticipated higher demand, naive predictive models trained on
observational data can suffer from selection bias and may incorrectly infer
upward-sloping demand curves. Correcting for this bias is essential for reliable
pricing decisions.

To address this challenge, we adopt a \emph{doubly robust (DR) estimator}
\citep{dudik2011doubly}, which combines outcome modeling with propensity score
estimation and yields consistent estimates if either component is correctly
specified. The outcome model estimates the probability of upgrade acceptance at a given
markup, while the propensity model estimates the probability of observing that
markup given the booking context.  
Both the outcome model and the propensity model were trained using an automated ML pipeline for algorithm selection, feature engineering, and
hyperparameter tuning. In this application, gradient-boosted decision trees
(XGBoost) \citep{chen2016xgboost} were selected for both components.


The feasible markup grid for each market was defined based on historical price
distributions and business input. Long-haul routes used coarser increments (e.g.,
\$5), while short-haul routes used finer granularity. Across markets, the grid
typically consisted of 10 to 25 discrete price points. During live deployment, the
grid was periodically refined to introduce new price points and improve learning
of price-response behavior.

Once trained, the counterfactual model was used to generate predictions
\( g_{i,t} \), representing the estimated probability that booking \( i \) would
accept an upgrade at price point \( t \in [m] \) defined in the price grid. These predictions form the sole
inputs to the downstream prescriptive optimization model.


In markets with limited historical price variation, raw model predictions occasionally produced economically implausible price-response curves, such as locally positive price elasticities. Before passing the predictions to the optimization layer, we therefore applied a lightweight calibration step. For each observed booking context, we fixed non-price covariates, evaluated the model on the feasible price grid, and adjusted the resulting response curve to satisfy economically plausible elasticity bounds \citep{pindyck2018microeconomics}. The calibrated curve was anchored at the factual observation and informed by analyst-provided elasticity ranges and standard demand-modeling principles. The resultant calibration task is a low-dimensional curve-fitting problem in the space of the price covariates that can be executed in seconds. This transformation preserves the modular structure of COAT while improving the suitability of the counterfactual inputs for price optimization.

A related practical challenge arose when new prices introduced during the pilot had zero
historical support, rendering inverse propensity weights undefined. After
evaluating several alternatives, including propensity smoothing, we adopted an
outcome-only modeling strategy for these cases due to its robustness and
implementation simplicity. Additional discussion is provided in
Section~\ref{appendix:DR_unseen_price}.


\subsection{Policy Graph and Constraint Modeling for Ancillary Pricing}

Regulatory and operational constraints require ancillary pricing policies to
depend on a restricted, interpretable feature set, which in turn defines the
\emph{policy graph} over which prescriptive decisions are optimized. While a rich
set of features is used for counterfactual estimation, the prescriptive policy is
limited to attributes compatible with ATPCO fare filing rules and real-time
execution. The final feature set was determined in collaboration with the airline
and includes origin--destination pairs, flight timing, book-to-flight days, fare
class, and trip type, i.e., features that are operationally accessible and suitable for
publication. Following extensive experimentation, the airline imposed a
user-defined upper bound of 50 rules per market, reflecting a tradeoff between revenue
optimality and policy complexity for governance and auditability.

\paragraph{Capacity constraints.}
Capacity constraints are modeled as inter-rule constraints. Ideally, such
constraints would be formulated using real-time inventory or load-factor
information. However, because inventory data were not available at decision time
and pricing policies were refreshed weekly, we instead imposed \emph{surrogate
constraints} based on predicted outcomes.

Specifically, for each origin--destination pair and product, we constrained the
aggregate predicted upgrade conversion rate under the prescribed prices to remain
within a tolerance band around historical levels. These constraints serve as
stability guardrails, preventing abrupt demand shifts while still allowing
meaningful price differentiation across customer segments.

\paragraph{Domain-specific pricing logic.}
Additional business rules, such as monotonicity with respect to booking time
(e.g., higher prices closer to departure), arbitrage prevention (e.g., ensuring
that two one-way upgrades do not undercut a round-trip upgrade), and systematic
price differentiation for itineraries without weekend stays, which are typically
associated with business travel—are also modeled as inter-rule constraints.

\paragraph{Guardrail constraints.}
To control the risk of systematic overpricing or underpricing, we integrate
pricing recall metrics directly into the optimization as \emph{guardrail
constraints}. We adopt the price-decrease recall (PDR) and price-increase recall
(PIR) metrics introduced by \citet{airbnb}, which provide interpretable proxies
for pricing quality in settings where ground-truth optimal prices are unavailable.
Whereas prior work typically uses these metrics for post-hoc evaluation, we
incorporate them explicitly during policy optimization to regulate aggregate
pricing behavior.

Intuitively, PDR and PIR measure how frequently the prescribed prices decrease or
increase relative to historical prices across observed transactions, including
both upgrade and non-upgrade outcomes. Bounding these quantities limits excessive
upward or downward price movements in aggregate, helping mitigate revenue risk,
customer dissatisfaction, and fairness concerns arising from systematic price
shifts.

Let \( z_j \) indicate whether rule \( j \) is selected, \( \delta_j \) denote the
number of observations covered by rule \( j \) for which the prescribed price is
lower than the historical price, and \( \eta_j \) denote the total number of
observations covered by that rule. The aggregate PDR constraint is given by
\begin{equation}
\label{pdr_constraint}
\pdr_{\mathrm{lb}}
\;\le\;
\frac{\sum_j \delta_j z_j}{\sum_j \eta_j z_j}
\;\le\;
\pdr_{\mathrm{ub}}.
\end{equation}

This constraint anchors the learned policy to historical pricing behavior in
aggregate while still allowing targeted deviations where supported by the data.
As shown in Section~\ref{pdr}, the constraint can be reformulated as a
mixed-integer linear constraint using a Charnes--Cooper transformation and
enforced either globally in the master problem or locally during path generation.

\paragraph{Rule-validity constraints.}
Rule-validity constraints ensure interpretability, statistical reliability, and
compliance with airline pricing practices. These include limits on rule length,
prohibitions on disallowed feature combinations, and minimum sample coverage
requirements. Because these constraints apply to individual paths rather than
collections of rules, they are modeled as intra-rule constraints and enforced
directly during path construction in the policy graph. Infeasible paths are pruned
early in the pricing subproblem, avoiding the need to linearize complex logical
constraints in the master problem.

\paragraph{Forward-looking customer mix.}
Because the booking mix varies over time, we adjust the sample weights used in policy optimization to better reflect the expected customer mix during deployment. In implementation, path-level sample counts are updated using a lightweight exponential-smoothing forecast based on recent booking patterns. This allows the policy to optimize against a forward-looking distribution rather than the historical sample distribution alone.

As discussed in Section~\ref{sect_constraint}, this separation between inter-rule
and intra-rule constraints is critical for scalability. Aggregate business logic
is handled through dual prices in the master problem, while rule-level feasibility
is enforced locally during column generation. This design enables the generation
of compact, interpretable, and operationally valid pricing policies suitable for
enterprise deployment.

\subsection{Pilot Design}
\label{sect:pilot}

We conducted a live field pilot with a major global airline in 2023 to evaluate
whether AI-driven, interpretable pricing policies could deliver measurable
business impact while operating under real-world regulatory, fairness, and
operational constraints. The pilot was initially planned as a 12-week engagement,
with pricing policies refreshed weekly and models retrained using the most recent
transaction data. Encouraged by early results, the pilot was extended and
ultimately transitioned into sustained operation.

This section describes the experimental design, evaluation methodology, and
deployment outcomes. Particular attention is paid to causal identification and
fairness considerations, which play a central role in airline pricing and
preclude the use of standard randomized experimentation.

\subsubsection{Evaluation Metrics}

The primary business metric used to evaluate pricing performance is
\emph{upsell revenue per booking}, defined as total premium ancillary revenue
normalized by the number of bookings:
\begin{equation}
\text{Upsell Revenue per Booking}
=
\frac{\text{Total Upsell Revenue}}{\text{Number of Bookings}}.
\end{equation}

This normalization is essential in airline settings, where booking volumes vary
substantially over time due to seasonality, demand shocks, and macroeconomic
factors. Measuring revenue on a per-booking basis enables meaningful comparisons
across time periods and markets, independent of traffic fluctuations. Conversion
rate (number of upgrades sold) and yield (revenue per upgrade) are tracked as
secondary metrics to help interpret the drivers of revenue changes.

\subsubsection{Evaluation Design and Identification}

In principle, the gold standard for estimating treatment effects is a randomized
controlled trial  or A/B test. In airline pricing applications, however, such
experiments are often infeasible. Offering different prices to customers who are
otherwise indistinguishable based on observable booking context (e.g.,
origin--destination, fare class, travel dates, loyalty status) raises legal,
regulatory, and ethical concerns related to price discrimination and fairness.

Several alternative designs were considered. Temporal alternating
between AI pricing and legacy pricing was rejected due to the advance-purchase
nature of airline bookings, which makes it impossible to cleanly align treatment
timing with observed outcomes. Year-over-year comparisons were also unsuitable,
as demand patterns during 2020--2022 were heavily distorted by the COVID-19
pandemic and do not provide a stable baseline.

Given these constraints, the airline adopted a \emph{test-versus-control market}
design. Twelve test markets were selected across short-, medium-, and long-haul
routes. For each test market, three to four control markets were identified based
on geographic similarity, historical booking volumes, ancillary attach rates, and
expected demand stability, yielding 30 distinct control markets. While this design provides a reasonable baseline,
direct comparisons remain vulnerable to time-varying confounders such as
seasonality and demand shifts, motivating the use of a more rigorous causal
evaluation method.

\subsubsection{Synthetic Control Method}
\label{sect:scm}

To isolate the causal effect of the AI-driven pricing policy, we employ the
\emph{synthetic control method} (SCM)~\citep{abadie2010synthetic}. As
\citet{athey2017state} note, \emph{``arguably the most important innovation in the
evaluation literature in the last 15 years is the synthetic control approach''},
particularly in settings where randomized experimentation is infeasible.

SCM constructs a data-driven counterfactual for each treated market using a
weighted combination of control markets, with weights chosen to closely match the
treated market’s outcome trajectory during a pre-pilot period. By explicitly
aligning pre-intervention trends, SCM provides a more credible benchmark than
simple averaging across matched controls, especially in environments with strong
seasonality and evolving demand conditions. More details can be found in Section~\ref{ec_SCM}. 

In our deployment, SCM weights were estimated using a nine-week pre-pilot window.
Figure~\ref{fig_SCM_weights} illustrates the resulting weight distributions, which
are highly non-uniform across control markets. This heterogeneity reflects the
importance of data-driven weighting in reproducing historical dynamics and
highlights the limitations of nominal control averages. By anchoring inference on
observed pre-treatment behavior, SCM enables a rigorous and interpretable
assessment of the pricing policy’s causal impact in a real-world operational
setting.

\begin{figure}
 \centering
  \includegraphics[width=.6\linewidth]{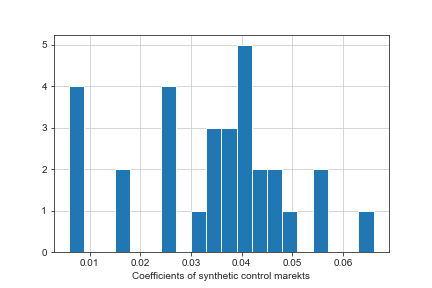}
  \caption{Histogram of synthetic control weights across matched control markets.}
  \label{fig_SCM_weights}
\end{figure}

\subsection{Pilot Results}
\label{sec:pilot_results}

To evaluate the performance of the AI-driven pricing pilot, we align all test and
control markets on a common weekly timeline. Week~1 corresponds to the start of
the pre-pilot observation period in 2023, and the intervention begins in Week~10,
when the prescriptive tree policy was deployed in production. From that point
onward, upsell prices in the 12 designated test markets were determined by
AI-generated policies, updated on a weekly cadence.

Although the pilot was initially planned to run for 12 weeks, strong business
performance led the airline to extend the deployment. We report results observed
through Week~26.

\subsubsection{Quantitative Impact on Revenue}

We evaluate the revenue impact of the AI-driven pricing policy using the
synthetic control method (SCM). Control weights are estimated using the
pre-pilot period (Weeks~1--9), and post-intervention effects are assessed by
comparing the treatment markets to their synthetic counterparts from Week~10
onward. For reference, we also report a nominal control benchmark constructed
by uniformly averaging the same control markets.

\begin{figure}
 \centering
  \includegraphics[width=.8\linewidth]
  {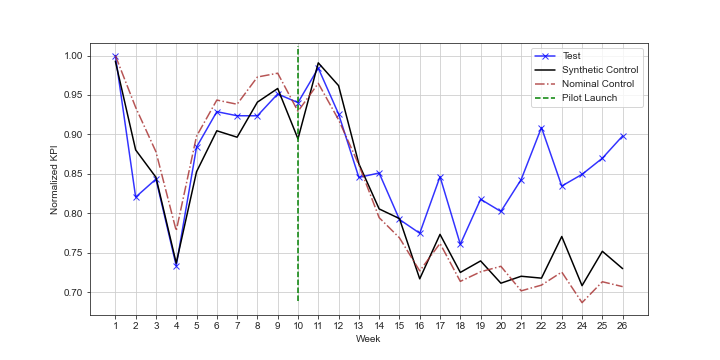}
  \caption{Normalized weekly KPI trajectories for the treatment group under \textsc{COAT}, synthetic control, and nominal control. The vertical dashed line marks the pilot launch.}
  \label{fig:scm_comparison}
\end{figure}

Figure~\ref{fig:scm_comparison} shows the resulting KPI trajectories over the
26-week evaluation horizon. The synthetic control closely tracks the
pre-intervention trend of the treatment group, whereas the nominal control
exhibits larger deviations prior to deployment. Following the intervention,
the treatment group consistently outperforms both benchmarks, indicating a
positive revenue impact of the AI-driven pricing policy.

The difference in pre-pilot fit between the two benchmarks is economically
meaningful. Because the synthetic control lies above the nominal control for
much of the post-pilot period, reliance on uniform averaging would have
overstated the treatment effect. This comparison highlights the importance of
using a well-matched counterfactual in observational field settings.

Figure~\ref{fig_SCM_treatment} reports the estimated treatment effect, defined
as the difference between the treatment KPI and its synthetic control. We
summarize it as the average weekly gap over each window. Early gains were
negligible, averaging $0.5\%$ during Weeks~10--15. The impact strengthened over
time, averaging $8.9\%$ during Weeks~16--26. Over the full post-intervention
period (Weeks~10--26), the average weekly treatment effect is $6.9\%$ relative to
the synthetic control.

This increasing effect reflects an operational learning dynamic: early in
deployment, limited historical price variation constrained inference of
fine-grained demand responses. As the pricing policy was retrained weekly and
the price grid was progressively refined, richer variation improved learning
and enabled more precise price differentiation across segments.

Placebo-based SCM tests support the robustness of these results, rejecting the
null of no effect at the 5\% level ($p = 0.032$); details are provided in
Section~\ref{appendix_placebo}.

\begin{figure}
 \centering
  \includegraphics[width=.8\linewidth]
  {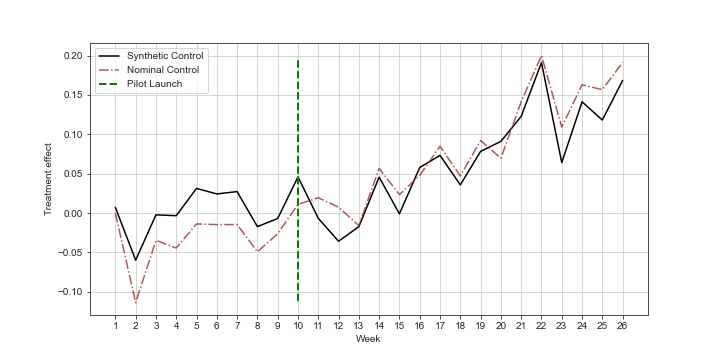}
  \caption{Estimated weekly treatment effect under \textsc{COAT} relative to the synthetic control.}
  \label{fig_SCM_treatment}
\end{figure}

\subsubsection{Behavioral and Fairness Analysis}

Beyond aggregate revenue gains, a central concern for the airline was whether
AI-generated prices would remain stable, interpretable, and fair. To assess these
dimensions, we examine how recommended prices vary across booking segments
relative to historical baselines.

Figure~\ref{fig:scm_prices} visualizes AI-recommended prices at the segment level.
Each bubble represents a segment (scaled by booking volume), with the horizontal
axis indicating the historical conversion rate and the vertical axis showing the
percentage change in recommended price relative to legacy pricing. The policy
systematically increases prices for high-conversion, price-inelastic segments
and decreases prices for low-conversion, price-elastic segments.

Crucially, this behavior arises under the explicit \emph{guardrail constraints}
 embedded in the optimization model. These inter-rule constraints
bound the aggregate frequency of price increases and decreases relative to
historical pricing, anchoring the policy to observed market behavior and
preventing systematic overpricing or underpricing at the market level. As a
result, the policy adapts to heterogeneous willingness to pay while avoiding
abrupt pricing shocks or discriminatory patterns.

Notably, the learned policy recommends more price decreases than increases in aggregate, directly benefiting price-sensitive customers. By contrast, the flat legacy upsell prices were often less attractive to price-sensitive leisure travelers, while leaving revenue unrealized among late-booking business travelers, who typically exhibit higher willingness to pay. Moreover, the policy established a baseline upsell increment that allowed the airline to continue applying its inventory-driven operational adjustments without conflict. These gains were achieved with minimal changes to the existing business process and without requiring extensive changes to core revenue-management infrastructure.

\begin{figure}
 \centering
  \includegraphics[width=.7\linewidth]
  {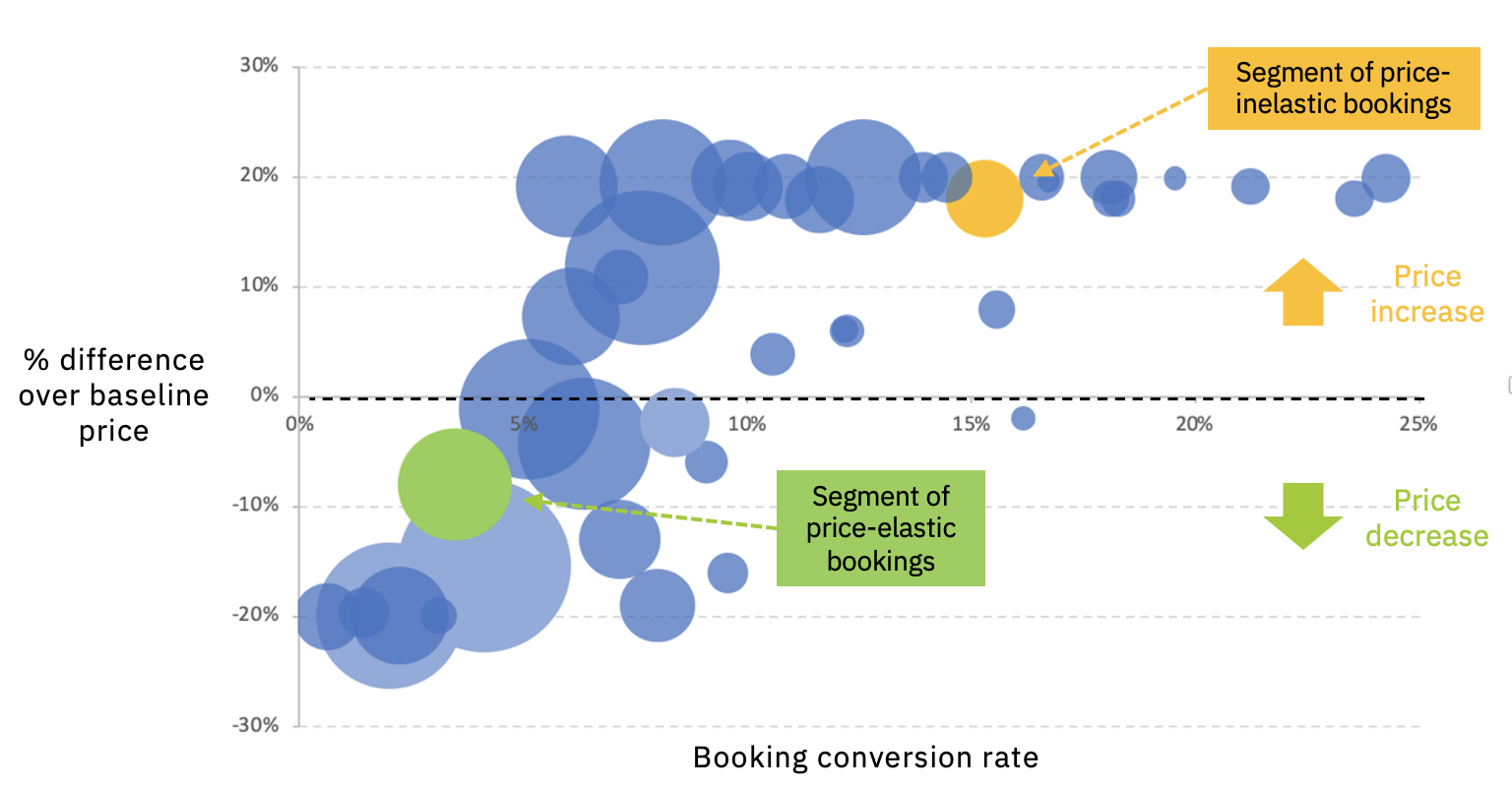}
  \caption{Distribution of AI-recommended prices relative to historical benchmarks}
  \label{fig:scm_prices}
\end{figure}

\subsubsection{Secondary Metrics and Operational Implications}

Table~\ref{tab:pilot_results} reports secondary performance metrics, including
conversion rate and yield. While the AI-driven pricing policy led to a modest
decline in upgrade conversion (–0.6\%), it delivered a substantial increase in
yield (+16.4\%), indicating improved monetization efficiency.

This pattern reflects intentional tradeoffs. By raising prices for
price-insensitive segments and lowering prices elsewhere, the model sells fewer
upgrades overall but at more appropriate price points. From an operational
perspective, the reduction in paid upgrades was viewed positively: it preserved
premium inventory for high-value uses such as complimentary upgrades for loyal
customers, supporting broader customer experience objectives.

Taken together, these results demonstrate that the AI-driven pricing policy
achieves revenue gains not through indiscriminate price increases, but through
controlled, interpretable price differentiation that respects fairness and
operational constraints.

\begin{table}[htbp]
\centering
\caption{Relative performance of test markets compared to control markets.}
\label{tab:pilot_results}
\begin{tabular}{lccc}
\toprule
& Overall & \multicolumn{2}{c}{By sub-period} \\
\cmidrule(lr){2-2} \cmidrule(lr){3-4}
Metric (\%) & Weeks 10--26 & Weeks 10--15 & Weeks 16--26 \\
\midrule
Revenue per booking & 6.9 & 0.5 & 8.9 \\
Conversion rate  & -0.6  & -0.6 & -0.7 \\
Yield (Revenue per upgrade) & 16.4 & 11.9 & 20.5 \\
\bottomrule
\end{tabular}
\end{table}

\subsubsection{Organizational Adoption and Strategic Impact}

Encouraged by early results, the initial 12-week pilot was extended by an additional five weeks and ultimately transitioned into continuous operation. The system has remained in production use since the pilot, indicating durability well beyond the initial evaluation window. Based on observed performance during the pilot, the airline projected that scaling the approach across eligible domestic markets could generate approximately \$50--\$150 million in incremental annual premium seat revenue, substantially exceeding the original success threshold of a 2--3\% uplift. The airline further noted that the revenue potential is likely higher in
international markets, where ancillary markups are typically larger. As a
result, the COAT framework was formally adopted as a strategic component of the
airline’s long-term ancillary pricing strategy.

Beyond financial impact, the pilot marked a shift in internal perceptions of AI,
from an exploratory capability to a reliable driver of measurable business
value. As one senior business sponsor remarked:
\vspace{0.5em}
\begin{quote}
\emph{``This is the first success we have had at the airline in getting AI
to deliver measurable business results.''}
\end{quote}
\vspace{0.5em}

This shift was enabled by the framework’s emphasis on interpretability and
operational alignment. By integrating counterfactual inference with large-scale
optimization, \textsc{COAT} produced pricing policies that respected regulatory and
business constraints while remaining transparent and auditable. The rule-based
structure allowed pricing analysts and business stakeholders to inspect,
validate, and adjust decisions, fostering cross-functional trust and easing
organizational adoption.
Taken together, the pilot demonstrates how a rigorously formulated prescriptive
optimization framework can translate causal modeling advances into sustained,
enterprise-scale impact. 

\subsection{Managerial Insights}

This study offers several lessons for managers deploying AI in high-stakes operational settings, where decisions must balance performance, fairness, interpretability, and operational constraints.

\paragraph{From prediction to prescription.}
Many enterprise AI initiatives emphasize predictive accuracy, such as forecasting demand, conversion, or willingness to pay. However, business value is realized only when predictions are translated into decisions that respect operational constraints. The COAT framework makes the decision problem explicit: it learns counterfactual outcomes and optimizes over a constrained action space, thereby targeting decision quality rather than intermediate predictive metrics. In the airline pricing application, this shift enabled measurable revenue gains while preserving managerial control over when and how prices change.

\paragraph{Interpretability and constraints can enable adoption.}
In regulated and customer-facing domains, interpretability and compliance are often viewed as obstacles to AI-driven optimization. Our deployment suggests the opposite. Representing policies as interpretable prescriptive trees and embedding business and regulatory constraints made the system easier to audit, validate, and deploy. These features were central to addressing governance concerns and external scrutiny around pricing fairness. Rather than limiting performance, interpretability and constraints helped build organizational trust and enabled sustained adoption.

\paragraph{Modular deployment reduces risk but limits upside.}
The deployed system optimized ancillary prices as markups over existing economy fares, treating the airline's revenue management system as an upstream input rather than pursuing full end-to-end integration. This modular design reduced engineering complexity and operational risk, allowing rapid deployment without disrupting legacy infrastructure. At the same time, modularity limits responsiveness because the pricing layer does not directly incorporate real-time capacity and demand signals, such as remaining seat inventory or booking dynamics. A promising direction for future operational gains is to integrate prescriptive decision layers more tightly with core operational systems while preserving auditability and control.

\paragraph{Credible evaluation and fairness controls are essential.}
Evaluating AI-driven decisions is difficult when randomized experimentation is infeasible due to legal, ethical, or operational constraints. In this setting, rigorous causal evaluation methods, including synthetic control, provided credible estimates of the deployed policy's incremental impact and helped build stakeholder confidence. Equally important, fairness-oriented guardrails within the optimization model helped ensure that revenue gains were not driven by systematic overpricing or unintended distributional effects. For managers, this highlights the importance of pairing prescriptive AI systems with both credible evaluation frameworks and explicit fairness controls.

\paragraph{AI adoption requires a decision layer.}
The broader managerial lesson is that successful AI adoption is not simply a matter of improving prediction or replacing existing decision processes. Modern machine learning can expand predictive capabilities, but organizations still need principled ways to optimize decisions under uncertainty, constraints, and governance requirements. COAT illustrates how operations research can provide this decision layer, translating AI advances into deployable, auditable, and high-impact operational systems.

\section{Conclusion}

This paper introduced the Counterfactual Optimal Action Tree (COAT) framework, which integrates causal inference and large-scale optimization to learn interpretable decision policies from observational data. By decoupling counterfactual outcome estimation from policy optimization, COAT constructs decision rules that are data-driven, interpretable, and explicitly aware of operational and regulatory constraints. In a large-scale airline pricing deployment, the framework delivered statistically and economically significant revenue gains, demonstrating the feasibility of prescriptive AI at enterprise scale.

Beyond the specific application, the results highlight a broader opportunity for operations research in AI-enabled decision-making. While modern machine learning excels at prediction, translating predictive insights into actionable, auditable, and compliant decisions remains a central challenge in practice. COAT shows how OR methods can bridge this gap by providing a structured optimization layer that aligns AI-driven decisions with business objectives and real-world constraints.

Several directions for future research remain. One is to extend COAT to multi-period settings that capture intertemporal tradeoffs, learning effects, and demand dynamics. Another is to incorporate uncertainty in counterfactual estimates, model misspecification, or distributional shifts directly into the optimization stage. These extensions would further strengthen the role of structured, optimization-based decision frameworks in supporting reliable and resilient AI deployments.

\bibliographystyle{informs2014} 
\bibliography{ref} 

\newcounter{mainlastpage}
\setcounter{mainlastpage}{\value{page}}
\ECSwitch

\ECHead{Electronic Companion: Counterfactual Optimal Action Trees (COAT)}

\section{Additional Algorithmic Details on Handling Numerical Features}
\label{subsect_numerical_features}

This section supplements Section~\ref{sect_rule_space} of the main paper and
describes how numerical features are incorporated into the \textsc{COAT}
framework.

Tree-based methods typically require numerical features to be discretized before
policy learning. A common approach is to partition a continuous feature into a
fixed set of disjoint intervals. For example, consider a numerical feature with
support on \([0,1]\) discretized into \(\kappa=3\) bins,
\([0,0.33)\), \([0.33,0.67)\), and \([0.67,1.0)\). In a multiway-split tree, this
corresponds to creating three mutually exclusive nodes, one per interval.

While simple, this representation can be restrictive. In particular, it cannot
capture threshold-based decisions commonly expressed in binary trees, such as
\(x \le 0.67\) or \(x > 0.33\), without introducing additional tree depth. To
address this limitation, we adopt a \emph{cumulative binning} (CB) scheme that
allows overlapping intervals.

Under cumulative binning, in addition to the original disjoint bins, we create
nodes corresponding to unions of adjacent intervals. In the above example, this
yields the additional intervals \([0,0.67)\), \([0.33,1.0)\), and \([0,1.0)\),
resulting in a total of six candidate nodes. This construction enables
multiway-split trees to represent threshold-type rules within a single level,
thereby improving expressiveness without increasing tree depth. We employed
cumulative binning in the live airline pilot to capture operationally meaningful
pricing thresholds while preserving a shallow and interpretable policy structure.

The cumulative binning approach generalizes naturally: discretizing a numerical
feature into \(\kappa\) base bins produces \(\kappa(\kappa+1)/2\) cumulative
intervals. While this increases the size of the policy graph, the resulting growth
in the policy space is handled effectively by the column generation algorithm,
which dynamically introduces only those paths that improve the objective.

Finally, we note that the \textsc{COAT} formulation enforces a coverage constraint
to ensure that the selected rules do not overlap in their sample assignments.
This guarantees that, despite the presence of overlapping bins, the final
prescriptive policy remains well-defined and assigns a unique action to each
observation.

\section{Exact State Aggregation in Policy Prescription}
\label{ec:state_aggregation}

This section provides the formal details underlying the exact state aggregation
described in Section~5.2 of the main paper. In particular, we present the reduced
mixed-integer formulation induced by aggregating decision-equivalent observations
and establish its equivalence to the original \textsc{COAT} formulation.

\subsection{Decision-Equivalent State Aggregation}

Let $X_i \in \mathbb{R}^k$ denote the full feature vector for observation $i$, and
let $Z_i = R(X_i) \in \Omega$ denote the prescriptive feature vector obtained by
restricting $X_i$ to features available at decision time. Observations with
identical prescriptive features are said to be \emph{decision-equivalent}.

Formally, define the equivalence relation
\[
i \sim i' \quad \Longleftrightarrow \quad Z_i = Z_{i'} ,
\]
which partitions the dataset into $N'$ equivalence classes
\[
\mathcal{G}_1, \dots, \mathcal{G}_{N'}, \qquad
N' = \bigl| \{ Z_i : i = 1,\dots,n \} \bigr|.
\]

Because all prescriptive policy rules depend only on $Z$, every observation within
a group $\mathcal{G}_{i'}$ satisfies exactly the same set of candidate paths in the
policy graph. Consequently, coverage indicators are constant within each group.

\subsection{Reduced Mixed-Integer Formulation}
\label{ec:reduced_mip}

Let $a_{ij}$ denote the coverage indicator specifying whether path $j$ covers
observation $i$ in the original formulation. For each group $\mathcal{G}_{i'}$,
define
\[
a_{i'j} := a_{ij} \quad \text{for any } i \in \mathcal{G}_{i'} .
\]

Define the aggregated penalty and reward for group $i'$ under path $j$ as
\[
c_{i'} := \sum_{i \in \mathcal{G}_{i'}} c_i,
\qquad
r_{i'j} := \sum_{i \in \mathcal{G}_{i'}} g_{i,t_j},
\]
where $g_{i,t_j}$ denotes the counterfactual reward of assigning action $t_j$ to
observation $i$.

Using these grouped quantities, the \textsc{COAT} problem admits the following
exact reformulation:
\begin{align}
\textsc{(Reduced--COAT)} \quad
\max \quad &
    \sum_{j=1}^{|\mathcal{P}|} r_j z_j
    - \sum_{i'=1}^{N'} c_{i'} s_{i'} \nonumber \\
\text{s.t.} \quad
& \sum_{j=1}^{|\mathcal{P}|} a_{i'j} z_j + s_{i'} = 1,
    && \forall i' = 1,\dots,N', \nonumber \\
& \sum_{j=1}^{|\mathcal{P}|} z_j \le l, \quad z \in \mathcal{Z}, \nonumber \\
& z_j \in \{0,1\}, \quad s_{i'} \ge 0. \nonumber
\end{align}

Because all observations within $\mathcal{G}_{i'}$ share identical coverage
constraints, replacing individual constraints with a single group-level
constraint is exact. The reduced formulation therefore preserves feasibility and
objective value for every feasible policy.

\subsection{Equivalence to the Original \textsc{COAT} Formulation}

First, for any feasible solution $(z,s)$ to the original \textsc{COAT}
formulation, define $s_{i'} := \sum_{i \in \mathcal{G}_{i'}} s_i$. Because coverage
indicators are constant within each group, the group-level coverage constraints
are satisfied exactly, and the objective value is preserved by linearity.

Conversely, given any feasible solution $(z,s')$ to the reduced formulation, one
can construct a feasible solution to the original formulation by assigning the
same slack $s_{i'}$ to each observation $i \in \mathcal{G}_{i'}$. Feasibility and
objective value are again preserved.

Therefore, the original and reduced formulations are equivalent: they admit the
same optimal objective value and the same set of optimal policy selections $z$.

\section{Practical Adaptation of Counterfactual Estimation for Expanding Action Spaces}
\label{appendix:DR_unseen_price}

A key challenge in our setting is that not all upsell prices are historically observed, particularly as the feasible price grid is expanded during deployment. Standard doubly robust (DR) estimators combine an outcome regression with inverse propensity weighting (IPW) to achieve consistency when either component is correctly specified. For a given price \( t \) and covariates \( X_i \), the DR estimator takes the form
\[
\hat{Y}^{\mathrm{DR}}_i(t)
= \hat{f}(X_i,t)
+ \frac{\mathbb{I}(T_i=t)}{\hat{e}(X_i,t)}
\bigl(Y_i - \hat{f}(X_i,t)\bigr),
\]
where \( \hat{f}(X_i,t) \) denotes the outcome model and
\( \hat{e}(X_i,t)=P(T_i=t\mid X_i) \) the propensity score.  
When \( \hat{e}(X_i,t)=0 \), however, the estimator is undefined. This reflects a structural limitation rather than estimation error: inverse propensity weighting cannot be applied to actions outside the historical support. Consequently, standard DR methods cannot be directly used when policies are allowed to select prices not previously observed.

We considered two approaches to address this issue.

\paragraph{(1) Outcome-only modeling.}
This approach omits the IPW correction and relies solely on the regression model to impute counterfactual outcomes across the feasible price grid. For each observation \( i \),
\[
\hat{Y}_i(t)=\hat{f}(X_i,t), \quad \forall t\in\mathcal{T}_i,
\]
where \( \mathcal{T}_i \subseteq \mathcal{T} \) denotes the set of admissible prices for that instance. These estimates are used directly in the downstream policy optimization problem. While this sacrifices the double-robustness guarantee, it avoids undefined estimators and permits extrapolation to unseen prices, provided the outcome model generalizes appropriately.

\paragraph{(2) Propensity smoothing.}
As an alternative, we explored smoothing the propensity function over continuous prices to retain a DR-style correction. Specifically, we approximate
\[
\hat{e}(X_i,t)
\approx
\sum_{t'\in\mathcal{T}_{\mathrm{obs}}}
K_\sigma(t,t')\,\hat{e}(X_i,t'),
\]
where \( K_\sigma \) is a smoothing kernel and
\( \mathcal{T}_{\mathrm{obs}} \) denotes observed prices. While this reduces the incidence of zero propensities, it imposes strong smoothness assumptions on pricing behavior and can substantially increase variance when historical support is sparse.

\paragraph{Final implementation.}
Given operational requirements for stability, scalability, and frequent retraining, we adopted the outcome-only approach in production. Models were retrained weekly, progressively expanding the observed support over \( \mathcal{T} \) and improving generalization to new price points. In this setting, outcome-only estimation provided a stable and interpretable basis for policy evaluation while accommodating an expanding action space.

\paragraph{Relation to the identification assumptions.}
We note a tension with the identification assumptions of Section~\ref{sect:counterfactual_estimation}: prices newly introduced during the pilot have zero historical support, so the positivity condition \( P(T=t\mid X=x)>0 \) fails for precisely the actions the policy may select, and the doubly robust correction is undefined there. For these actions we necessarily rely on the outcome model's extrapolation rather than on a consistency guarantee. We mitigate the resulting risk in three ways. First, weekly retraining progressively expands the observed support, restoring overlap for prices that enter the live grid and thereby re-establishing positivity for those actions over time. Second, the guardrail (PDR/PIR) constraints bound the aggregate deviation of prescribed prices from historical pricing, limiting the policy's reliance on poorly-supported regions of the action space. Third, our headline evaluation is based on the synthetic-control analysis of \emph{realized} revenue (Section~\ref{sec:pilot_results}), which measures the deployed policy's causal impact directly and is therefore agnostic to extrapolation error in the upstream estimator. Tighter handling of expanding action spaces, for example via bounds on the extrapolation bias or constrained exploration that guarantees support before deployment, is an important direction for future work.

\section{Proofs of Theoretical Results}
\subsection{Proof of Theorem~\ref{them:equivalence}}
\proof{Proof.}
Each path \( p \in \mathcal{P} \) corresponds to a conjunction of feature-value
tests arranged in a fixed global feature order, where \textsc{SKIP} nodes
indicate omitted features. Consequently, every path represents a valid rule that
assigns an action to all samples satisfying its conditions.

At optimality, the coverage constraint~\eqref{constraint_coverage} ensures that
the selected paths \( \mathcal{P}(\mathbf{z}^*) \) are mutually exclusive and
collectively exhaustive over the sample set, with each sample covered by exactly
one active path.

We construct a multiway decision tree by merging the selected paths into a trie
ordered by the global feature sequence, sharing common prefixes and branching on
feature values whenever paths diverge. Because each path contains at most one
test per feature and follows the same ordering, this construction yields a
valid multiway decision tree with one leaf per selected path and at most
\( l \) leaves.

Finally, since each sample is assigned to exactly one leaf and each leaf
corresponds to a selected path, the total reward of the constructed tree equals
the sum of rewards of the active paths, which is exactly the objective value of
the \textsc{COAT} solution \( \mathbf{z}^* \). 
\endproof

\subsection{Proof of Proposition~\ref{prop_feature_space}}

\proof{Proof. }
Each of the \(k\) features can independently assume up to \(\eta\) values. Any
unconstrained rule may involve any combination of feature assignments, and thus
the number of distinct feature--value combinations is at most \(\eta^k\).
\endproof

\subsection{Proof of Proposition~\ref{prop_constrained_space}.}
\proof{Proof. }
A path of depth \(d\) may involve at most \(d\) features. There are
\(\binom{k}{d}\) ways to select the subset of active features, and for each such
subset there are at most \(\eta^d\) possible value assignments. Hence,
\[
|\mathcal{P}_d| \le \binom{k}{d}\eta^d.
\]
\endproof

\subsection{Proof of Proposition~\ref{prop_lp_upper_bound}}
\proof{Proof. }
By Proposition~\ref{prop_constrained_space},
\( |\mathcal{P}_d| = \mathcal{O}\!\left(\binom{k}{d}\eta^d\right) \), which is
polynomial in \(k\) when \(d\) and \(\eta\) are fixed. The LP relaxation of
\textsc{COAT} over \(\mathcal{P}_d\) therefore has polynomially many variables and
\(n\) coverage constraints, and can be solved in polynomial time using standard
linear programming methods~\citep{khachiyan1980polynomial,karmarkar1984new}.
\endproof

\subsection{Proof of Theorem~\ref{thm:reduction_speedup}}
\proof{Proof. }
With \(k'\) categorical features each taking at most \(\eta\) possible values,
there are at most \(\eta^{k'}\) distinct prescriptive feature combinations,
establishing \(N' \le \eta^{k'}\). Substituting this bound into the ratio
\(n/N'\) yields the stated reduction factor.

For completeness, writing \(n = 2^\delta\) gives the equivalent expression
\[
\frac{n}{N'}
    \ge 2^{\,\delta - k'\log_2 \eta},
\]
which makes explicit the exponential dependence on \(k'\) and \(\eta\). When
\(n < \eta^{k'}\), the bound is trivially at least \(1\), which can be expressed
using the positive-part notation \((x)^+=\max\{x,0\}\) as
\[
\frac{n}{N'}
\ge
2^{(\delta - k'\log_2\eta)^+}.
\]
\endproof

\subsection{Proof of Proposition~\ref{prop:multiway_split_complexity}}
\proof{Proof.}
A binary split on a categorical feature with $\eta$ values corresponds to a
partition of the value set into two nonempty parts, say $S$ and $S^c$.
There are $2^\eta-2$ nonempty proper subsets $S\subset\{1,\dots,\eta\}$, but the
partition $(S,S^c)$ is identical to $(S^c,S)$. Identifying complements therefore
yields $(2^\eta-2)/2 = 2^{\eta-1}-1$ distinct nontrivial binary splits.

In the equality-based representation, admissible tests are exactly
$\{X_f = v : v\in\{1,\dots,\eta\}\}$, hence there are $\eta$ atomic tests.
\endproof

\subsection{Proof of Proposition~\ref{prop:depth_overhead_mip}}
\proof{Proof.}
We first show the upper bound. Consider any multiway node that tests a
categorical feature $X_f\in\{1,\dots,\eta\}$ via equality branches.
This $\eta$-way split can be implemented by a binary decision tree that
identifies the value of $X_f$ using at most $\lceil\log_2 \eta\rceil$ binary
tests, for example by encoding the category index in binary and querying its
bits, or by using a balanced binary search over the $\eta$ labels.
Thus each multiway test contributes at most $\lceil\log_2 \eta\rceil$ to binary
depth. Along any root--to--leaf path with at most $d$ tested features, the total
binary depth is therefore at most $d\lceil\log_2 \eta\rceil$, yielding a binary
tree $\mathcal{T}_B$ that implements the same policy.

We now consider the lower bound. 
Consider the prescriptive domain $\{1,\dots,\eta\}^k$, which contains $\eta^k$
distinct feature profiles. Define a policy that assigns a distinct action (or,
more generally, a distinct leaf label) to each profile.\footnote{If the action
set is smaller than $\eta^k$, one may instead define a policy that still
requires distinguishing all $\eta^k$ profiles, e.g., by mapping each profile to
a distinct leaf label and then merging labels as needed; the depth lower bound
concerns separation of inputs rather than the cardinality of actions.}

Any binary-split decision tree of depth $D$ has at most $2^D$ leaves and thus can
separate at most $2^D$ distinct input profiles into distinct terminal regions.
To implement the policy above, the tree must have at least $\eta^k$ leaves, so
$2^D \ge \eta^k$. Taking $\log_2$ yields $D \ge k\log_2 \eta$, hence
$D \ge \lceil k\log_2 \eta\rceil$.
\endproof

\subsection{Exact MIP reformulation of the PDR constraint}\label{pdr}

This subsection provides algorithmic details for the \emph{guardrail constraints}
introduced in Section~\ref{sect_constraint}. These constraints regulate aggregate
pricing behavior by bounding the frequency of price increases and decreases
relative to historical pricing, as measured by price-increase recall (PIR) and
price-decrease recall (PDR). By constraining these quantities at the policy level,
the optimization mitigates the risk of systematic overpricing or underpricing,
while still allowing targeted deviations supported by the data. We focus here on
the exact mixed-integer linear reformulation of the PDR constraint; the PIR
constraint is handled analogously.

\begin{proposition}[Exact MILP reformulation of the PDR constraint]
\label{prop:pdr_linearization}
Assume at least one rule is selected so that $\sum_{j} \eta_j z_j \ge 1$.
Then the constraint
\begin{equation}
\label{eq:pdr_frac}
\pdr_{\mathrm{lb}}
\;\le\;
\frac{\sum_{j} \delta_j z_j}{\sum_{j} \eta_j z_j}
\;\le\;
\pdr_{\mathrm{ub}}
\end{equation}
admits an exact mixed-integer linear reformulation.
\end{proposition}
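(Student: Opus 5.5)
The plan is to exploit the fact that the denominator $D(z) := \sum_j \eta_j z_j$ is strictly positive under the stated assumption $D(z)\ge 1$. This allows the fractional constraint~\eqref{eq:pdr_frac} to be cleared of its denominator. Along the way I will also give the Charnes--Cooper form mentioned in Section~\ref{sect_constraint}, since that form is convenient for the LP relaxation used in column generation.

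\textbf{Step 1: direct multiplication.} Because $\eta_j \ge 0$ and $D(z) > 0$, multiplying each side of~\eqref{eq:pdr_frac} by $D(z)$ preserves the inequalities. This gives the equivalent pair of linear constraints
\[
\sum_j (\delta_j - \pdr_{\mathrm{lb}}\,\eta_j)\, z_j \ge 0,
\qquad
\sum_j (\pdr_{\mathrm{ub}}\,\eta_j - \delta_j)\, z_j \ge 0 .
\]
For binary $z$ with $D(z)\ge 1$, these hold if and only if~\eqref{eq:pdr_frac} holds. Both directions follow simply by dividing or multiplying by the positive scalar $D(z)$.

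The assumption $D(z) \ge 1$ can itself be imposed as the linear constraint $\sum_j \eta_j z_j \ge 1$. It is automatically satisfied whenever the coverage constraints~\eqref{constraint_coverage} hold with zero slack and $n\ge1$. Together with the integrality of $z$, this yields an exact MILP. Each selected column $j$ contributes the coefficients $\delta_j - \pdr_{\mathrm{lb}}\eta_j$ and $\pdr_{\mathrm{ub}}\eta_j - \delta_j$, so these constraints fit the inter-rule form with $\rho_{mj}$ given by those coefficients. Their duals therefore enter the reduced cost exactly as in Section~\ref{sect_constraint}.

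\textbf{Step 2: Charnes--Cooper form.} Define $u := 1/D(z) \in (0,1]$ and $w_j := u z_j$. The ratio then becomes the linear expression $\sum_j \delta_j w_j$, subject to the normalization $\sum_j \eta_j w_j = 1$. The bilinear relation $w_j = u z_j$ has $z_j$ binary and $u\in[0,1]$. It can therefore be linearized exactly with the McCormick inequalities
\[
w_j \le z_j,\qquad w_j \le u,\qquad w_j \ge u-(1-z_j),\qquad w_j\ge 0 .
\]
These are exact because one factor is binary.

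I would then check that any $(z,u,w)$ satisfying these constraints has $w_j = u z_j$, and that the normalization forces $u = 1/D(z)$. This correspondence establishes equivalence in both directions.

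\textbf{Main obstacle.} The only delicate point is the sign and positivity of the denominator. Multiplying through by $D(z)$, or defining $u$, is valid only when $D(z)>0$; if $D(z)=0$, the ratio is undefined and the linear inequalities become trivially satisfied. The hypothesis $D(z)\ge 1$ exists to rule this case out, so I would state explicitly that it is enforced as a constraint. Beyond that, the argument is routine.
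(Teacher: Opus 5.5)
Your proposal is correct. Your Step~2 is essentially the paper's proof. The paper sets $w=1/D(z)$ and $y_j=wz_j$, and imposes $\sum_j\eta_j y_j=1$ together with the McCormick rows $0\le y_j\le w$, $y_j\le z_j$, $y_j\ge w+z_j-1$. It then argues that binarity of $z_j$ forces $y_j=wz_j$, hence $w=1/D(z)$, and bounds the linear expression $\sum_j\delta_j y_j$.

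Your Step~1 is a genuinely different and more elementary route, and it proves the proposition on its own. Clearing the positive denominator gives $\sum_j(\delta_j-\pdr_{\mathrm{lb}}\eta_j)z_j\ge 0$ and $\sum_j(\pdr_{\mathrm{ub}}\eta_j-\delta_j)z_j\ge 0$ directly in $z$. This buys several things:
\begin{itemize}
\item It needs no auxiliary variables or per-column linking rows, which matters when columns are generated dynamically.
\item It is literally of the inter-rule form $\sum_j\rho_{mj}z_j$, so its duals enter the reduced cost as in Section~\ref{sect_constraint}.
\item It requires only $D(z)>0$. The paper's envelope implicitly forces $w\le 1$, i.e.\ $D(z)\ge 1$. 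That would cut off valid solutions whenever $0<D(z)<1$, as can happen if the counts $\eta_j$ are replaced by fractional weights.
\item Its LP relaxation is never weaker and can be strictly tighter. Take $\eta_1=\eta_2=1$, $\delta_1=1$, $\delta_2=0$, $\pdr_{\mathrm{lb}}=1/2$, $\pdr_{\mathrm{ub}}=1$. The point $z=(1/2,1)$ with $w=y_1=y_2=1/2$ satisfies all of the paper's linear constraints, but violates your first inequality, since $1/4-1/2<0$.
\end{itemize}
So your aside that the Charnes--Cooper form is the one ``convenient for the LP relaxation used in column generation'' has it backwards.

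What the paper's route buys in return is an explicit linear expression for the ratio value itself. That is what you need when the ratio must be optimized (a linear-fractional objective) rather than merely bounded. Its normalization row also excludes $D(z)=0$ automatically. Your Step~1 must add $\sum_j\eta_j z_j\ge 1$ explicitly, as you correctly do, because $z=0$ satisfies both cleared inequalities vacuously.
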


\proof{Proof. } 
Let $z_j \in \{0,1\}$ denote whether rule $j$ is selected. Define
\[
N(z) := \sum_{j} \delta_j z_j,
\qquad
D(z) := \sum_{j} \eta_j z_j.
\]
By assumption, $D(z)\ge 1$ for all feasible solutions.

Introduce a continuous variable $w \ge 0$ and auxiliary continuous variables
$y_j \ge 0$ for all $j$, intended to represent
\begin{equation}
\label{eq:cc_def}
w = \frac{1}{D(z)},
\qquad
y_j = w z_j \quad \forall j.
\end{equation}
Enforce \eqref{eq:cc_def} via the linear constraints
\begin{align}
\sum_{j} \eta_j y_j &= 1, \label{eq:cc_norm}\\
0 \le y_j &\le w, && \forall j, \label{eq:rlt1}\\
y_j &\le z_j, && \forall j, \label{eq:rlt2}\\
y_j &\ge w + z_j - 1, && \forall j. \label{eq:rlt3}
\end{align}
Constraints \eqref{eq:rlt1}--\eqref{eq:rlt3} are the standard McCormick/RLT
linearization for the bilinear terms $y_j = w z_j$ when $z_j$ is binary and
$0 \le w \le 1$. (The bound $w \le 1$ follows from $D(z)\ge 1$ and
$w = 1/D(z)$ enforced by \eqref{eq:cc_norm} and nonnegativity.)

We now show exactness. Fix any feasible $(z,w,y)$ satisfying
\eqref{eq:cc_norm}--\eqref{eq:rlt3}.
If $z_j=0$, then \eqref{eq:rlt2} implies $y_j \le 0$, hence $y_j=0$.
If $z_j=1$, then \eqref{eq:rlt1} gives $y_j \le w$ and \eqref{eq:rlt3} gives
$y_j \ge w$, hence $y_j = w$. Therefore, for all $j$,
\[
y_j = w z_j \quad \text{exactly}.
\]
Plugging this identity into \eqref{eq:cc_norm} yields
\[
\sum_{j} \eta_j (w z_j) = 1
\quad\Longleftrightarrow\quad
w \sum_{j} \eta_j z_j = 1
\quad\Longleftrightarrow\quad
w = \frac{1}{D(z)},
\]
so \eqref{eq:cc_def} holds.

Under \eqref{eq:cc_def},
\[
\frac{N(z)}{D(z)}
=
\left(\sum_{j} \delta_j z_j\right) \cdot w
=
\sum_{j} \delta_j (w z_j)
=
\sum_{j} \delta_j y_j.
\]
Hence the fractional constraint \eqref{eq:pdr_frac} is equivalent to the pair of
linear constraints
\begin{equation}
\label{eq:pdr_lin}
\pdr_{\mathrm{lb}}
\;\le\;
\sum_{j} \delta_j y_j
\;\le\;
\pdr_{\mathrm{ub}},
\end{equation}
together with \eqref{eq:cc_norm}--\eqref{eq:rlt3}. This reformulation is exact,
because any feasible $z$ with $D(z)\ge 1$ can be extended to $(w,y)$ by setting
$w = 1/D(z)$ and $y_j = w z_j$, and conversely any feasible $(z,w,y)$ satisfies
\eqref{eq:cc_def} and therefore induces the same PDR value as in
\eqref{eq:pdr_frac}. This yields an exact MILP representation.
\endproof

\section{Evaluation Methodology} 

\subsection{Synthetic Control Design and Identification}\label{ec_SCM} 
We use the synthetic control method (SCM) to construct a counterfactual outcome
trajectory for each treated market as a weighted combination of control markets.
Let $s \in \{1,\dots,S\}$ index time periods (e.g., weeks). Let
$Y_s^{\text{T}}$ denote the observed outcome (e.g., upsell revenue per booking) of
the treated market at time $s$, and let $Y_s^{(j)}$ denote the outcome of control
market $j \in \{1,\dots,J\}$ at time $s$. SCM defines the synthetic control as
\begin{equation}
Y_s^{\text{SC}} = \sum_{j=1}^{J} w_j\, Y_s^{(j)},
\label{eq:scm_synth}
\end{equation}
where the weights $w_j$ form a convex combination:
\begin{equation}
w_j \ge 0 \ \ \forall j,
\qquad
\sum_{j=1}^{J} w_j = 1.
\label{eq:scm_simplex}
\end{equation}

Let $\text{Pre} \subset \{1,\dots,S\}$ denote the pre-intervention periods.
The SCM weights are chosen to best match the treated market's pre-intervention
trajectory by solving the constrained least-squares problem
\begin{equation}
\min_{w \in \mathbb{R}^J}
\ \sum_{s \in \text{Pre}}
\left(
Y_s^{\text{T}} - \sum_{j=1}^{J} w_j Y_s^{(j)}
\right)^2
\quad
\text{s.t. } \eqref{eq:scm_simplex}.
\label{eq:scm_opt}
\end{equation}

Given the fitted weights $w^\star$, the treatment effect at time $s$ is estimated
as the gap between the treated outcome and its synthetic counterpart:
\begin{equation}
\tau_s = Y_s^{\text{T}} - Y_s^{\text{SC}}.
\label{eq:scm_tau}
\end{equation}
Let $\text{Pilot}$ denote the set of post-intervention (pilot) periods with
$S_{\text{pilot}} := |\text{Pilot}|$. The average treatment effect over the pilot
window is
\begin{equation}
\bar{\tau}
=
\frac{1}{S_{\text{pilot}}}
\sum_{s \in \text{Pilot}} \tau_s.
\label{eq:scm_ate}
\end{equation}

\subsection{Robustness via Placebo-Based Inference}\label{appendix_placebo}

To further assess the robustness of the estimated treatment effect, we conduct a placebo test using SCM. This approach evaluates whether the observed effect in the treated group could plausibly have arisen by chance, and it is particularly valuable in observational field settings where randomized experiments are infeasible.

Recall that \( Y_s^{\text{T}} \) denote the observed outcome for the treatment group at time \( s \), and let \( Y_s^{\text{SC}} \) denote the outcome of its corresponding synthetic control, as constructed in Section~\ref{sect:scm}. Once the pilot begins, the treatment effect at each time \( s \) is estimated as:
\[
\tau_s = Y_s^{\text{T}} - Y_s^{\text{SC}}.
\]

\begin{figure}
 \centering
  \includegraphics[width=0.7\linewidth]
  {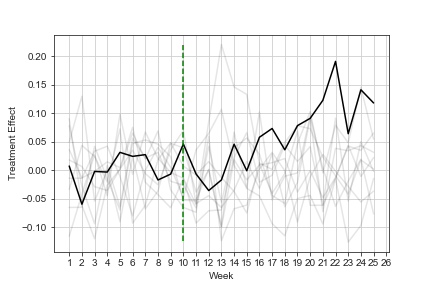}
  \caption{ Comparison of the treatment group effect under \textsc{COAT} (black line) with placebo effects estimated for each control market (gray lines).}
  \label{fig:placebo_test}
\end{figure}

For placebo analysis, we iteratively apply the SCM procedure to each control market \( j \in \mathcal{C} \), treating it as if it were exposed to the intervention. Let \( Y_s^{(j)} \) denote the observed outcome of control market \( j \), and let \( Y_s^{(j,\text{SC})} \) denote the synthetic control constructed for that placebo unit using the remaining markets as donors. The placebo effect (or gap) for unit \( j \) at time \( s \) is then:
\[
\tau_s^{(j)} = Y_s^{(j)} - Y_s^{(j,\text{SC})}.
\]
Each placebo synthetic control is refit independently with the same optimizer and predictor specification used for the treated unit, so that the placebo distribution reflects the same estimation procedure. Our donor pool comprises the \( N - 1 = 30 \) control markets, yielding 30 placebo trajectories in addition to the treated unit, for \( N = 31 \) units in total.

Figure~\ref{fig:placebo_test} displays the actual treatment effect trajectory \( \tau_s \) (black line), along with placebo trajectories \( \tau_s^{(j)} \) (gray lines) over the full evaluation window. During the pre-intervention period (Weeks 1–9), the treated and synthetic trajectories align closely, and the estimated treatment effect \( \tau_s \) lies within the range of placebo effects, validating the pre-treatment fit and justifying the counterfactual construction.

Following the intervention (Week 10 onward), \( \tau_s \) diverges sharply from the distribution of placebo trajectories, consistently lying in the upper tail. This provides strong evidence that the observed uplift is unlikely to result from noise or latent confounding.

\paragraph{Inference via the RMSPE ratio.}
To convert this visual evidence into a formal hypothesis test of the sharp null of no treatment effect, we follow \citet{abadie2010synthetic} and rank units by the ratio of post- to pre-intervention root mean squared prediction error (RMSPE). For each unit \( j \), define
\[
r_j \;=\; \frac{\dfrac{1}{T - T_0}\displaystyle\sum_{s > T_0}\big(\tau_s^{(j)}\big)^2}{\dfrac{1}{T_0}\displaystyle\sum_{s \le T_0}\big(\tau_s^{(j)}\big)^2},
\]
where \( T_0 \) indexes the last pre-intervention period (Week 9) and \( T \) the end of the evaluation window. Normalizing the post-intervention gap by the pre-intervention fit is essential: a placebo unit may exhibit a large post-period gap simply because its synthetic control fits poorly even before any intervention, and dividing by the pre-period MSPE penalizes such poorly-fit units rather than rewarding them. The associated permutation \( p \)-value is the share of units whose ratio is at least as large as the treated unit's,
\[
p \;=\; \frac{\#\{\, j : r_j \ge r_{\text{T}} \,\}}{N}.
\]

Empirically, the treated unit attains the \emph{largest} post/pre RMSPE ratio among all 31 units (\( r_{\text{T}} \approx 11.0 \)), exceeding every placebo; the next-largest ratio is \( 10.4 \), after which the placebo distribution drops sharply (the third-largest ratio is \( 3.2 \)). With the treated unit ranked first out of 31, the resulting \( p \)-value is \( 1/31 \approx 0.032 \), rejecting the sharp null of no effect at the 5\% level. This is consistent with the post-pilot average treatment effect of 6.9\% reported earlier, which lies well outside the placebo distribution.

These results reinforce the causal interpretation of the AI pricing policy's impact. Placebo SCM serves as a useful nonparametric inference method, offering internal validity checks without relying on parametric assumptions or randomized experimentation. We note that the resolution of the \( p \)-value is bounded below by \( 1/N \): with 31 units the smallest attainable value is \( 0.032 \), so the treated unit's first-place ranking already delivers the strongest rejection the donor pool can support.







\end{document}